\documentclass{article}

\usepackage[preprint]{neurips_2026}

\usepackage[utf8]{inputenc}
\usepackage[T1]{fontenc}
\usepackage{hyperref}
\usepackage{url}
\usepackage{booktabs}
\usepackage{amsfonts}
\usepackage{amssymb}
\usepackage{amsmath}
\usepackage{amsthm}
\usepackage{nicefrac}
\usepackage{microtype}
\usepackage{xcolor}
\usepackage{bm}
\usepackage{dsfont}
\usepackage{enumitem}
\usepackage{CJKutf8}
\usepackage{setspace}
\usepackage{comment}
\usepackage{algorithm}
\usepackage{latexsym}
\usepackage[flushleft]{threeparttable}
\usepackage{longtable}
\usepackage{algpseudocode}
\usepackage{wrapfig}
\usepackage{lipsum}
\usepackage{makecell}
\usepackage{subcaption}
\usepackage{adjustbox}
\usepackage{tikz}

\usepackage{authblk}

\usetikzlibrary{shapes.geometric, arrows.meta, positioning, fit, backgrounds}

\allowdisplaybreaks

\theoremstyle{plain}
\newtheorem{theorem}{Theorem}
\newtheorem{corollary}{Corollary}
\newtheorem{remark}{Remark}

\usepackage{etoolbox}
\usepackage{tabularx, booktabs, graphicx}

\makeatletter
\newcommand{\ostar}{\mathbin{\mathpalette\make@circled\star}}
\newcommand{\make@circled}[2]{%
  \ooalign{$\m@th#1\smallbigcirc{#1}$\cr\hidewidth$\m@th#1#2$\hidewidth\cr}%
}
\newcommand{\smallbigcirc}[1]{%
  \vcenter{\hbox{\scalebox{0.77778}{$\m@th#1\bigcirc$}}}%
}
\makeatother

\newcommand{\commenting}[1]{}

\renewcommand{\hat}{\widehat}

\newcommand{\Var}[1]{{\operatorname{Var}\left\{#1\right\}}}

\newcommand{\Cov}[2]{{\operatorname{Cov}\left\{#1,#2\right\}}}

\newcommand{\E}[1]{{\mathbb{E}\left\{#1\right\}}}

\newcommand{\ind}[1]{\boldsymbol{1}\left\{#1\right\}}

\ifdef{\see}{\renewcommand{\see}[1]{\text{ (#1)}}}{\newcommand{\see}[1]{\text{ (#1)}}}

\def\boxit#1{\vbox{\hrule\hbox{\vrule\kern6pt\vbox{\kern6pt#1\kern6pt}\kern6pt\vrule}\hrule}}

\newcolumntype{P}[1]{>{\centering\arraybackslash}p{#1}}
\newcolumntype{M}[1]{>{\centering\arraybackslash}m{#1}}
\newcolumntype{L}[1]{>{\raggedright\arraybackslash}m{#1}}

\newcommand{\bbR}{{\mathbb{R}}}

\title{BACON: Budgeted Human Calibration for Modeling and Evaluation with Multiple AI Judges}

\author[1]{{Lei Shi}}
\author[1]{{Anlan Zhang}}
\author[2]{{Rita Lyu}} 
\author[1]{{Zhengmian Hu}}
\author[1]{{Tong Yu}} 
\author[1]{{David Arbour}}
\author[2]{\authorcr {Avi Feller}}
\author[1]{{Saayan Mitra}}
\author[1]{{Ritwik Sinha}} 

\affil[1]{Adobe Research}
\affil[2]{University of California, Berkeley}

\begin{document}

\maketitle

\begin{abstract}
    AI judges are increasingly used to reduce the cost of human evaluation. While they provide a scalable and inexpensive alternative, their outputs can be biased relative to human preferences and highly item-dependent, with substantial variation across judges, tasks, and domains. When practitioners rely on uncalibrated AI evaluations for model ranking, item scoring, or population-level quality reporting, systematic bias can propagate directly into downstream decisions and make the conclusions unreliable. We propose BACON, a four-stage pipeline that combines budgeted human calibration with multiple AI-judge outputs to obtain more accurate annotations. BACON first constructs full-coverage auxiliary features for every item in the evaluation pool, including multi-judge scores, token-level uncertainty statistics, and contextual embeddings. It then obtains human labels for a small sampled subset and trains a cross-fitted outcome model to produce calibrated item-level surrogate predictions. These predictions support two downstream modes: the first is population-level estimation of summary metrics, such as means, quantiles, or other estimands, using an augmented estimating-equation estimator with valid confidence intervals; and the second is individual-level surrogate scoring for item scoring and ranking. Throughout, BACON treats AI judges as auxiliary measurements rather than ground truth: human labels provide the calibration anchor, while AI-derived signals serve as surrogate predictions and improve efficiency. We validate BACON on a variety of tasks and domains. Across settings and labeling budgets, the cross-fitted outcome model improves predictive accuracy and ranking consistency, while the estimating-equation estimator reduces bias and variance relative to raw AI outputs and purely human-label-based methods. These results suggest that BACON provides a practical framework for scalable, statistically grounded evaluation with limited human annotation.
\end{abstract}

\section{Introduction}\label{sec:introduction}

\subsection{Background: Bias in AI-assisted evaluations}
Many important decisions rely on large-scale human evaluations: grading student writing, scoring persuasive essays, and rating the aesthetics, trustworthiness, and usability of websites, among others. Human labels are expensive, especially when the goal is not only to score individual items but also to estimate {population-level} summary statistics, such as the mean quality of a corpus, under a fixed budget. Using generative AI as scalable judges has become a popular practice, providing low-cost scores for large item pools. However, AI-judge outputs can be systematically biased relative to human preferences, and their errors are often item-dependent and model-specific. Below, we give three examples showing that this mismatch appears across tasks and modalities.

\textbf{Case Study 1: Essay scoring with PERSUADE Dataset}. The PERSUADE dataset \citep {crossley2024large} contains over 25,000 argumentative essays from U.S. students in grades 6--12, each assigned a holistic human score from 1 to 6. We prompt several LLMs to score the same essays and compare their outputs to the human labels (prompt in Appendix \ref{app:prompt}). 
Figure~\ref{fig:persuade-score-distribution} compares human and LLM score distributions on PERSUADE. Human scores are more heavy-tailed than the AI scores, and some AI models (e.g. llama-3.1-8b) concentrate around an entirely different peak.

\begin{figure*}[ht!]
    \centering
    \begin{subfigure}[t]{0.4\linewidth}
        \centering
        \includegraphics[width=\linewidth]{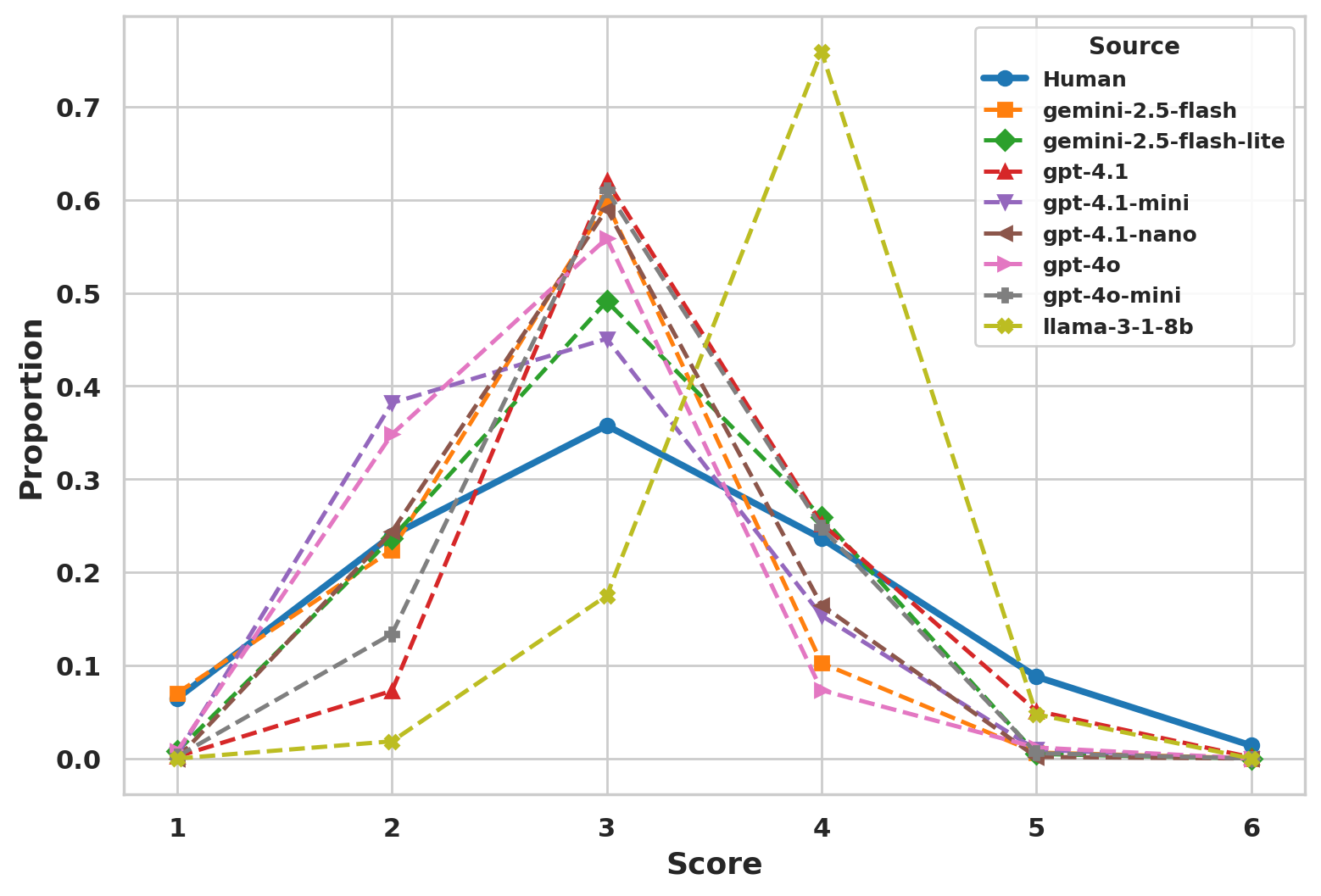}
        \caption{}
        \label{fig:persuade-score-distribution}
    \end{subfigure}
    \begin{subfigure}[t]{0.4\linewidth}
        \centering
        \includegraphics[width=\linewidth]{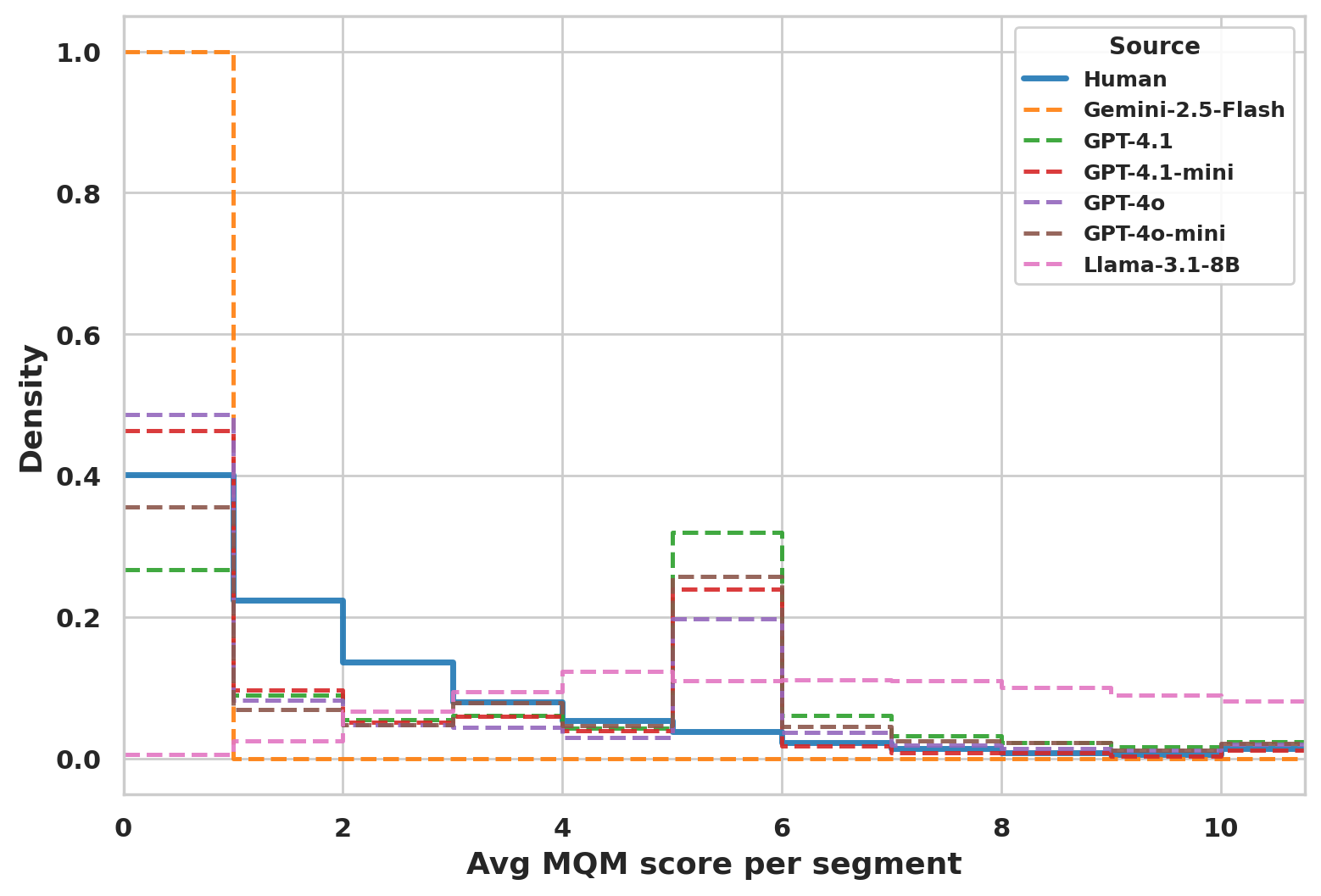} 
        \caption{}
        \label{fig:mqm-alignment}
    \end{subfigure}
    \caption{Score distribution comparisons between human raters and AI judges for PERSUADE and MQM. Panel (a): Grade-10 PERSUADE score proportions for human labels and all 8 LLM judges (scores 1--6). Human shown as solid line; LLMs as dashed lines. Panel (b): MQM segment-level score distributions for human raters and six LLM judges on WMT 2020 en--de. Both datasets show systematic disagreement between human labels and AI judges. }
    \label{fig:score-distributions}
\end{figure*}

\textbf{Case Study 2: Machine translation evaluation}. We also study the WMT 2020 English--German MQM dataset \citep{freitag2021experts}, where expert raters annotate translation errors by category and severity and each segment receives a non-negative MQM score (lower is better). We use six LLM judges (listed in Figure~\ref{fig:mqm-alignment}) to produce MQM-style annotations across ten MT systems, including three human translations. Figure~\ref{fig:mqm-alignment} shows systematic miscalibration: some LLM judges largely underestimate or overestimate error rates (Gemini-2.5-Flash and Llama-3.1-8B), while others demonstrate a two peak distribution that is not observed in human labels.

\textbf{Case Study 3: Web-design scoring}. Finally, we consider visual evaluation of website screenshots, where models must judge perceptual qualities such as aesthetics, trustworthiness, typicality, and usability \citep{miniukovich2023effect}. Figure~\ref{fig:webdesign-unis-score-distribution} shows the score distributions for the Universities subset of the WebDesign dataset and reveals substantial disagreement between human labels and VLM judges.

\begin{figure*}[t]
    \centering
    \includegraphics[width=\linewidth]{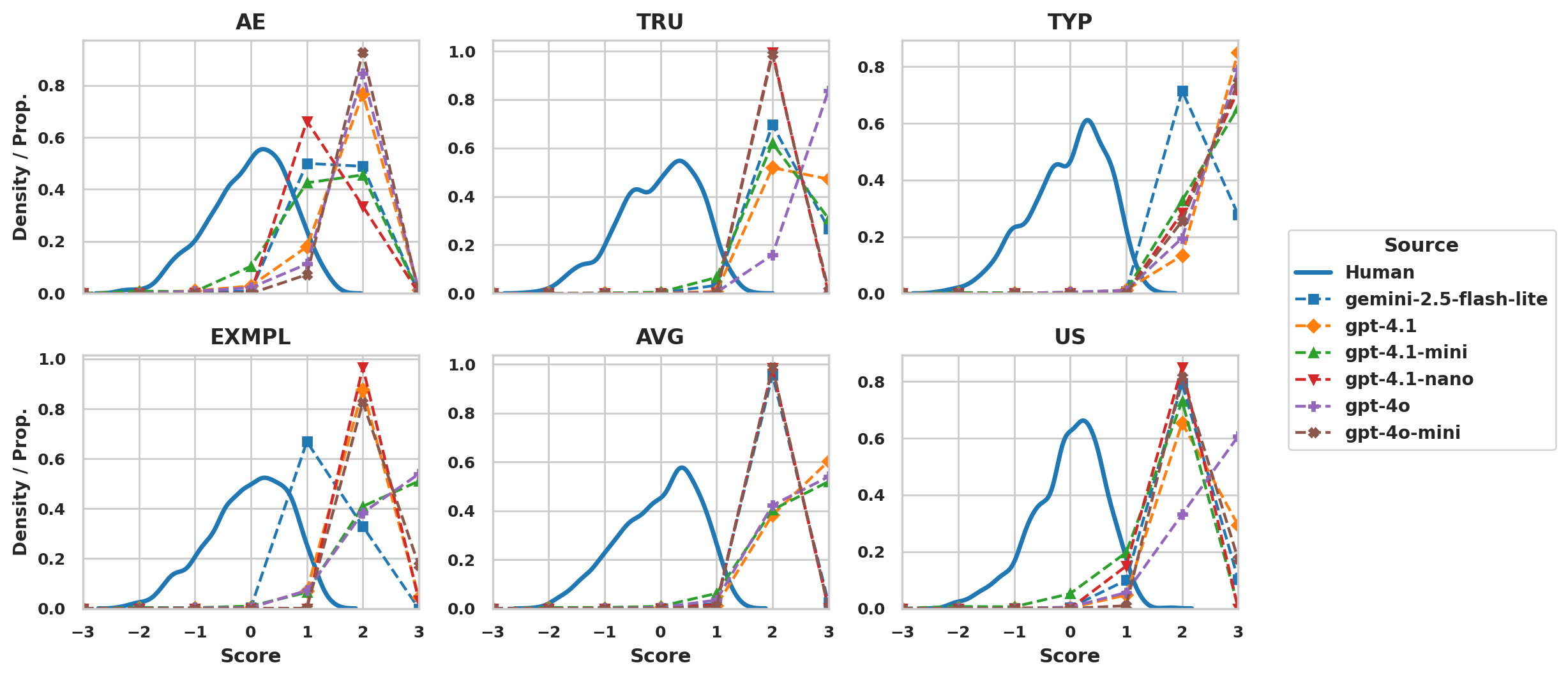}
    \caption{WebDesign Universities score distributions for six perceptual outcomes (scores $-3$ to $3$). Human scores are shown as KDE curve; VLM judges as proportion lines.}
    \label{fig:webdesign-unis-score-distribution}
\end{figure*}

Together, these examples show a common problem: AI scores usually don't target the same measurement as the human labels. If practitioners simply average AI scores or use them to rank items or systems, they are making an implicit and usually unvalidated measurement assumption. We propose BACON to address this problem.

\subsection{BACON framework overview and our contributions}
BACON is an evaluation protocol for making human-aligned claims under limited labels. For a target item population, the evaluator specifies the human outcome and the evaluation report of interest, such as a population mean, system ranking, or item-level score. BACON collects full-coverage auxiliary measurements from multiple AI judges, including scores, uncertainty statistics, and item embeddings. A small portion of samples is then labeled by humans and used to fit a cross-fitted calibration model from AI measurements to the target human outcome. BACON reports calibrated item-level surrogate scores and aggregate human-aligned estimates with uncertainty intervals. Throughout the protocol, AI judges are treated as auxiliary measurements rather than replacements for human labels: human calibration anchors the validity of the final claim, while AI measurements improve efficiency when they are predictive.

To summarize, we make the following contribution:

(1) \textbf{Human-calibrated AI-judge evaluation protocol.}
We formulate AI-assisted evaluation as a budgeted human-calibration problem in which multiple AI judges are treated as auxiliary measurements, not ground truth. AI judge scores provide full-coverage outcome predictions, and a small uniform human sample supplies a bias correction, decoupling estimation accuracy from the quality of the outcome model.

(2) \textbf{Multi-judge calibration recipes.}
BACON distinguishes two evaluation tasks: empirically calibrated item-level surrogate scoring and statistically valid summary metric estimation.
For item-level evaluation, we propose building outcome models from multiple judge scores, uncertainty summaries, and context embeddings, and discuss practical model choices for different type of outcomes such as ordinal, zero-inflated, continuous, and categorical evaluation outcomes. For summary statistics report, BACON follows an estimating equation-based statistical framework to deliver estimators that are consistent for human assessment and provide valid uncertainty quantification through confidence intervals.

(3) \textbf{Cross-domain empirical study of AI-judge calibration.}
We validate BACON across three real-world datasets spanning multiple tasks and modalities, including PERSUADE essay scoring, MQM machine translation evaluation, and WebDesign web-design scoring. The results demonstrate substantial improvements in judge-model predictive accuracy for individual item quality and the estimation efficiency for summary metrics over several baselines. The code for replication is hosted in the following GitHub repo (anonymized for review): \url{https://github.com/diaryofnewton/bacon-calibration}.

\subsection{Related Works}

\textbf{LLM-as-a-judge.}
\citet{zheng2023llmjudge} establish the canonical LLM-as-judge paradigm via MT-Bench and Chatbot Arena; \citet{liu2023geval} extend it with chain-of-thought prompting for text-generation evaluation; and \citet{dubois2024alpacaeval} develop AlpacaEval for instruction-following comparison.
Despite their utility, LLM judges exhibit systematic biases in terms of position, verbosity, and self-enhancement \citep{wang2023faireval}, that distort rankings; surveys \citep{li2024llmjudge, gu2024llmjudge} catalog these failure modes.
BACON highlights such biases as a starting point and provides a principled calibration framework to correct them using a small sample of human labeled data.

\textbf{Multi-judge aggregation.}
Self-consistency \citep{wang2023selfconsistency} and multi-agent debate \citep{du2024multiagentdebate, liang2024divergent} use ensembles of reasoning paths or models to improve reliability.
In the evaluation setting, ChatEval \citep{chan2023chateval}, jury-style panels \citep{verga2024jury}, and structured multi-agent collaboration \citep{qian2025multijudge} show that aggregating diverse judges reduces individual biases.
BACON differs: rather than debate or ad-hoc ensembling, we treat judge scores as features in a supervised outcome model fit against human labels, yielding statistically grounded estimates.

\textbf{Prediction-powered inference (PPI) and semi-supervised estimation.}
\citet{angelopoulos2023ppi} introduce PPI, using model predictions plus a small labeled set to form valid confidence intervals; \citet{angelopoulos2023ppipp} extend this to PPI++ with adaptive weighting; \citet{zrnic2024crossppi} remove the held-out requirement via cross-fitting.
These ideas root in classical survey regression estimators \citep{sarndal1992model, cassel1976generalized} and semi-supervised inference \citep{zhang2019semisup, cai2018semisup}. In the application of summary metrics estimation, BACON follows directly this line of work. For the special case of estimating a population mean, our estimator is mathematically equivalent to a PPI / regression-adjusted estimator with a learned prediction function. The contribution of BACON is therefore not a new mean estimator. Instead, BACON addresses a problem left open by generic PPI formulations: how to construct reliable auxiliary predictions in AI-assisted evaluation when the available predictors are multiple biased AI judges, uncertainty statistics, as well as item embeddings, and how to report both aggregate inferential quantities and item-level surrogate scores without treating AI judgments as ground truth, which serves as the protocol layer between LLM-as-judge and PPI.


\textbf{Training and fine-tuning LLM judges.}
JudgeLM \citep{zhu2023judgelm}, Prometheus \citep{kim2024prometheus, kim2024prometheus2}, and PandaLM \citep{wang2024pandalm} fine-tune open-source models as scalable evaluators, while \citet{li2023generativejudge} produce evaluative rationales rather than scalar scores.
However, \citet{huang2025finetunedjudge} find that fine-tuned judges often underperform GPT-4 out-of-domain.
BACON is training-free: off-the-shelf judge outputs feed a lightweight calibration layer, requiring no judge-specific training data and supporting closed-source models.

\section{BACON Framework}

\subsection{Overview and notation setup}
We present a training-free pipeline for AI-assisted evaluation. BACON integrates three components to mitigate bias in AI judge outputs: (1) aggregating predictions from multiple judges to capture cross-judge variability; (2) leveraging contextual features to debias judge predictions; and (3) calibrating AI evaluations using a small sample of human-labeled data.

We study AI-assisted modeling and evaluation under limited human labeling budgets. Suppose we have a pool of items, each associated with a scalar or vector-valued human outcome. Our goal is to estimate population-level performance and improve item-level prediction by combining AI judge outputs in a principled way, with a small amount of human labels for calibration. 

Formally, for item $i \in \{1,\dots,N\}$, let $y_i$ denote the human score (or one outcome component), and let $\mathbf{\ell}_i \in \mathbb{R}^M$ collect $M$ model-judge scores. We also construct contextual embedding features $e_i$ (e.g., text embeddings for essays and image embeddings for screenshots). We then fit predictive models for $y_i$ using judge scores, uncertainty features, and embedding features to capture their relationship with human evaluations.

\subsection{Budgeted Evaluation Framework}

The proposed evaluation framework is as follows; Figure~\ref{fig:bacon-pipeline} gives an overview.

\begin{figure}[ht]
\centering
\begin{tikzpicture}[
  node distance = 4mm and 6mm,
  box/.style    = {rectangle, rounded corners=4pt, draw=#1!70!black,
                   fill=#1!12, text width=2.1cm, align=center,
                   font=\small\bfseries, inner sep=5pt, minimum height=1.0cm},
  arr/.style    = {-{Stealth[length=5pt]}, thick, gray!70!black},
  label/.style  = {font=\scriptsize, text=gray!60!black, align=center},
]

\node[box=blue]   (s1) {Stage 1\\Model Input\\Preparation};
\node[box=orange, right=of s1] (s2) {Stage 2\\Human Label\\Sampling};
\node[box=purple, right=of s2] (s3) {Stage 3\\Outcome Model\\Fitting};

\node[box=green!60!black, above right=-3mm and 6mm of s3] (d1)
      {Stage 4a\\Summary\\Metrics};
\node[box=red!70!black,   below right=-2mm and 6mm of s3] (d2)
      {Stage 4b\\Item Scores\\and Rankings};

\draw[arr] (s1) -- (s2);
\draw[arr] (s2) -- (s3);
\draw[arr] (s3.east) -- ++(3mm,0) |- (d1.west);
\draw[arr] (s3.east) -- ++(3mm,0) |- (d2.west);

\node[label, below=1mm of s1] {embeddings $e_i$\\scores $\ell_i$, uncertainty $u_i$};
\node[label, below=1mm of s2] {uniform or\\adaptive};
\node[label, below=1mm of s3] {cross-fitting\\$\hat{f}(e_i,\ell_i,u_i)$};
\node[label, right=1mm of d1] {$\hat\mu^{\mathrm{EE}}$, CIs};
\node[label, right=1mm of d2] {$\hat y_i$, rank};

\end{tikzpicture}
\caption{BACON evaluation pipeline. Stage 1 prepares inputs (context features and AI jury scores) for all items; Stage 2 collects a small human label sample; Stage 3 fits the outcome model with cross-fitting; Stage 4 routes to summary-metric estimation or individual scoring and ranking.}
\label{fig:bacon-pipeline}
\end{figure}

(1) Stage 1: Model input preparation. (i) Context features extraction.  For each item in the evaluation pool, we extract some context features. This includes the content embeddings, summary statistics, emotion and psychological attributes, among others \citep{mozer2025more}. (ii) AI Jury Scoring. We use multiple AI judges to score all the items in the pool. We can record auxiliary information such as the token entropy and perplexity. We can also run the AI judges for multiple rounds. 

(2) Stage 2: Human Label Sampling. We sample a small number of items to get human labels. The basic approach is to sample items uniformly at random, given a fixed budget. Also, we can apply adaptive sampling strategies to improve the sampling efficiency. We leave an detailed discussion of adaptive sampling strategies to Appendix \ref{app:sampling}.

(3) Stage 3: Outcome Model Fitting. We fit an outcome model to the human labels and the AI scores. For this part, we  use cross-fitting to obtain out-of-fold predictions to avoid overfitting. 

(4) Stage 4: Route the model for downstream tasks. Here we list two representative tasks to use the scores:
(i) Estimate summary metrics. We build regression-adjusted estimators to estimate aggregated metrics such as the population average of human scores, as well as providing uncertainty quantifications through confidence intervals. (ii) Get individual item predictions and ranks. We can use the model to predict surrogate human scores and rank the items. The surrogate scores can be used to flag low quality items. 

\subsection{Details of the BACON Framework}

\textbf{Joint modeling of context features and multi-judge evaluations. }
Overall, we try to fit a predictive model for $y_i$ from the context features $e_i$ and the judge evaluations $\ell_i$. Moreover, we can incorporate additional features $u_i$ which encodes the uncertainty of the judge evaluations. We propose to fit a joint model for $y_i$ from the context features $e_i$, the judge evaluations $\ell_i$ with uncertainty features $u_i$:
\begin{align*}
    \hat{y}_i = f(e_i, \ell_i, u_i),
\end{align*}
where $f$ is a function that we need to learn from data. We will discuss the choice of $f$ later. Intuitively, we hope the function $f$ can serve as a meta-judge that aggregates the single-judge evaluations and steer the prediction towards the human preferences. 

Embedding features $e_i$ capture item-level effects, such as topic, style, visual composition, that are not explained by judge scores alone, acting as fixed-effect controls.
Multiple judge scores $\ell_i$ are combined via model fitting rather than ad-hoc majority voting or averaging, allowing the outcome model to down-weight biased or unreliable judges.
Uncertainty features $u_i$ encode judge confidence: token-level entropy (available when log-probabilities are accessible) or sample-level variance from multiple judge runs (applicable to any model).

\textbf{Example models.} The choice of $f$ depends on the score pattern of the outcome. Table~\ref{tab:example-models} summarizes several instantiations; all share the same feature inputs $(e_i, \ell_i, u_i)$.
\begin{table}[ht]
\centering
\caption{Example outcome models in BACON. All models use embeddings $e_i$, judge scores $\ell_i$, and uncertainty features $u_i$ as inputs.}
\label{tab:example-models}
\small
\begin{tabular}{p{2.5cm} p{3.0cm} p{6.8cm}}
\toprule
\textbf{Score pattern} & \textbf{Example model} & \textbf{Key idea} \\
\midrule
Real-valued &
  Ridge regression &
  Linear predictor $\hat{y}_i = \alpha_0 + \alpha_e^\top e_i + \alpha_\ell^\top \ell_i + \alpha_u^\top u_i$; $\ell_2$ penalty controls judge collinearity. \\[4pt]
Non-negative, zero-inflated &
  Hurdle model (logistic gate $+$ conditional ridge) &
  Gate $\Pr(z_i{=}1)$ predicts whether $y_i{>}0$; conditional ridge predicts $\hat{y}_i^+$ on positive examples; final prediction $\hat{y}_i = \Pr(z_i{=}1)\cdot\hat{y}_i^+$. \\[4pt]
Ordinal &
  Proportional-odds (ordered logit) &
  Linear predictor $\eta_i = \alpha_0 + \alpha_e^\top e_i + \alpha_\ell^\top \ell_i + \alpha_u^\top u_i$ with $K{-}1$ ordered cutpoints $\theta_1 {<} \cdots {<} \theta_{K-1}$; $\Pr(Y_i{\leq}k) = \sigma(\theta_k - \eta_i)$. \\[4pt]
Categorical &
  Multinomial regression &
  $\Pr(Y_i{=}k) = \exp(\eta_i^{(k)}) / \sum_{j=1}^K \exp(\eta_i^{(j)})$, where $\eta_i^{(k)} = \alpha_0^{(k)} + \alpha_e^{(k)\top} e_i + \alpha_\ell^{(k)\top} \ell_i + \alpha_u^{(k)\top} u_i$. \\
\bottomrule
\end{tabular}
\end{table}

\textbf{Cross-fitting for outcome model fitting.}
A key requirement is that the outcome model $\hat{f}$ must not be fitted on the same sampled observations whose residuals are used for bias correction. If the same data is used for both fitting and prediction, the in-sample fit can be overly optimistic---a model that memorises its training labels will produce near-zero residuals, inflating bias for complex, low-regularization models. Cross-fitting \citep{chernozhukov2018double} resolves this by partitioning the sampled set $\mathcal{S} = \{i : S_i = 1\}$ into $K$ folds $\mathcal{S}_1, \ldots, \mathcal{S}_K$. For each fold $k$, the outcome model $\hat{f}^{(-k)}$ is trained on all sampled observations \emph{except} fold $k$, and its predictions on fold $k$ are used as the out-of-fold (OOF) predictions. Formally, for $i \in \mathcal{S}_k$:
\[
    \hat{y}_i = \hat{f}^{(-k)}(e_i, \ell_i, u_i), \qquad \hat{f}^{(-k)} = \operatorname{fit}\!\left(\{(e_j, \ell_j, u_j, Y_j) : j \in \mathcal{S} \setminus \mathcal{S}_k\}\right).
\]
Then we fit the an aggregated model $\hat{f}$ on the full sample $S$ and use it to predict the labels for items that are not in $S$. Through this formulation, the predictions $\hat{y}_i$'s cover all $N$ items. Because $\hat{f}^{(-k)}$ is fitted on a held-out split, the residuals $Y_i - \hat{y}_i$ are honest as they reflect genuine generalisation error rather than in-sample fit. This prevents overfit models from artificially suppressing the bias-correction variance, and makes the variance estimate more stable when the outcome model is complex. In practice, we use $K{=}5$ folds, each trained on 80\% of the sampled data, which balances the bias--variance trade-off for the OOF predictions.

\textbf{Summary metrics estimation with uncertainty quantification.} Summary metrics include population mean, median/quantiles, variance, etc. We can unify the estimation of common summary metrics using an estimating equation framework. Let $Y$ be the observed human scores (with \texttt{NA} if no human label is collected), $S$ be the indicator variable indicating whether a human label is collected, $X$ be some measurements for the context, such as the embedding features, the judge scores and uncertainty summary metrics, etc. Below we provide an estimating equation framework \citep{hardin2002generalized}. 

Suppose we are interested in a target parameter $\theta^\star = \theta(\mathbb{P}_{Y, S, X})$, which is the solution to the following estimating equation:
$
\E{m(Y, S, X;\theta^\star, \gamma^\star)} = 0,
$ 
where $m(Y, S, X;\theta, \gamma)$ is the estimating equation, and $\gamma^\star$ is a nuisance parameter. For example, for the mean and $\tau$-th quantile, we have 
\[
m_{\text{mean}}(Y, S, X;\theta, \gamma) = \frac{SY}{\pi(X)} - \theta,
\quad \text{and} \quad
m_{\text{quantile}}(Y, S, X;\theta, \gamma) = \frac{S \ind{Y \le \theta}}{\pi(X)} - \tau,
\]
respectively. Here $\pi(X)$ is the labeling propensity and serves as a given nuisance parameter $\gamma^\star$. In practice, we can apply many classical sampling schemes to collect the human labels, which corresponds to different choices of $\pi(X)$. For example, we can draw items uniformly at random, or perform stratified sampling based on some context features $X$. Alternatively, we may learn the labeling propensity from some pilot data using a propensity model. See Appendix \ref{app:sampling} for more details.

It is usually desirable to incorporate a regression adjustment term to the estimating equation to improve the efficiency of the estimator. For the above examples, we can augment the estimating equation as:
\begin{gather*}
    m_{\text{mean}}(Y, S, X;\theta, \gamma) = \frac{S(Y - \mu(X))}{\pi(X)} + \mu(X) - \theta,
    \\
    m_{\text{quantile}}(Y, S, X;\theta, \gamma) = \frac{S (\ind{Y \le \theta} - \phi(X;\theta))}{\pi(X)} + \phi(X;\theta) - \tau,
\end{gather*}
respectively. Here $\mu(X)$ is a regression adjustment for the human scores and $\phi(X;\theta)$ is a regression adjustment for the quantile. We can then estimate the target parameter $\theta^\star$ by solving the following estimating equation: $\mathbb{E}_N(m(Y, S, X; \hat{\theta}, \hat{\gamma})) = N^{-1}\sum_{i=1}^N m(Y_i, S_i, X_i; \hat{\theta}, \hat{\gamma}) = 0$, which we call as the augmented estimating-equation (AEE) estimator. Under some regularity conditions (Appendix \ref{app:ee}), the estimator $\hat{\theta}$ is consistent and asymptotically normal:
$
\sqrt{N}(\hat{\theta}-\theta^\star)\rightsquigarrow \mathcal{N}(0,\Sigma_\theta).
$
The asymptotic covariance takes the sandwich form $\Sigma_\theta = A^{-1} B A^{-T}$, where $A$ and $B$ are given by
\[
A = \E{\nabla_\theta m(Y,S,X;\theta^\star,\gamma^\star)}, \quad \text{and} \quad
B = \E{m(Y,S,X;\theta^\star,\gamma^\star)\,m(Y,S,X;\theta^\star,\gamma^\star)^\top}.
\]
Accordingly, a plug-in sandwich estimator of $\textup{Var}\{\hat{\theta}\}$ is $\widehat{v} = \frac{1}{n}\hat{A}^{-1}\hat{B}\hat{A}^{-T}$, with $\hat{A}$ and $\hat{B}$ given by
\[
\hat{A} = \mathbb{E}_N{\nabla_\theta m(Y,S,X;\hat{\theta},\hat{\gamma})}, \quad \text{and} \quad
\hat{B} = \mathbb{E}_N{m(Y,S,X;\hat{\theta},\hat{\gamma})\,m(Y,S,X;\hat{\theta},\hat{\gamma})^\top}.
\]
Then we can construct a confidence interval for $\theta^\star$ as $\hat{\theta} \pm z_{\alpha/2} \sqrt{\widehat{v}}$ with confidence level $1-\alpha$. When the target is the population mean, the AEE estimator is equivalent to the inverse-propensity weighted estimator upon regression adjustment in survey sampling \citep{sarndal1992model,cassel1976generalized} or predictive powered inference \citep{angelopoulos2023ppi}. For a rigorous technical presentation, please refer to Appendix \ref{app:ee}.

\section{Real-World Results}
\subsection{Setup}

\textbf{Datasets and Outcomes. }
(i) \textit{PERSUADE.} We use the PERSUADE training corpus with essay-level holistic labels \citep{crossley2024large}. In particular, main experiments target grade-10 essays. (ii) \textit{Machine translation.} We use the WMT 2020 English--German MQM (Multidimensional Quality Metrics) dataset \citep{freitag2021experts}, where expert human raters annotate translation errors by category and severity. Each segment receives a non-negative MQM score (lower is better; zero means no errors detected). (iii) \textit{WebDesign.} We use screenshot datasets from multiple universities \citep{miniukovich2023effect}. For each screenshot, we model six human-rated outcomes: Aesthetics, Trustworthiness, Typicality, Exemplar Goodness, Family Resemblance, and Usability. Due to space limit, we defer the WebDesign results to Appendix \ref{app:webdesign}.

\textbf{Embedding Features. }
(i) \textit{Text embeddings.} Essays/Machine translation segments are embedded using \texttt{text-embedding-3-large}\footnote{\url{https://developers.openai.com/api/docs/models/text-embedding-3-large}} from OpenAI. (ii) \textit{Image embeddings.} Screenshots are embedded with SigLIP \citep{zhai2023sigmoid}. To stabilize downstream linear models, we apply PCA to high-dimensional embeddings, retaining a tunable number of principal components.  

\textbf{Baselines and general metrics. }
We compare BACON against both uncalibrated AI-judge baselines and calibrated outputs. The uncalibrated baselines use raw judge scores directly, including single-judge scores and averages across judges. 
For calibrated models, to isolate the value of different signals, we evaluate calibrated models using only judge scores and uncertainty features, only embedding features, or the full joint feature set. For item-level prediction, we report $R^2$, root mean squared error (RMSE), and Spearman correlation. For summary-metric estimation, we report bias, standard deviation, and RMSE across repeated sampling trials. Monte Carlo experiments are run on a server with an Intel Xeon Platinum 8275CL CPU (96 cores, 3.00 GHz, 1.1 TB RAM). 

\subsection{PERSUADE}
\textbf{Oracle performance}. Table~\ref{tab:persuade-judge-ablation} reports out-of-fold judge performance when fitting the outcome model on grade-10 essays under six configurations with all human labels revealed, in order to evaluate the oracle performance of the model. We use an ordinal regression model. The full 8-judge model combined with embeddings achieves the best performance. Single-judge results reveal a slight performance drop with embeddings, and a large performance drop without embeddings, indicating that embedding features carry substantial predictive signal. As uncalibrated baselines, simply averaging all judges' raw scores (no outcome model) yields $R^2{=}0.361$, and averaging per-judge raw metrics yields $R^2{=}0.150$---substantially below all calibrated configurations. 

\begin{table*}[ht!]
    \centering
    \caption{PERSUADE judge-model performance (5-fold OOF CV,  ordinal regression). Lower RMSE is better; higher $R^2$ and Spearman $\rho$ are better.}
    \label{tab:persuade-judge-ablation}
    \begin{tabular}{lcccc}
        \toprule
        Configuration & Type & $R^2$ & RMSE & Spearman $\rho$ \\
        \midrule
        Eight judges + embedding & Calibrated & 0.699 & 0.602 & 0.831 \\
        Eight judges (no embedding) & Calibrated & 0.399 & 0.851 & 0.619 \\
        \midrule
        Single judge + embedding & Calibrated & 0.683 & 0.618 & 0.822 \\
        Single judge avg. (no embedding) & Calibrated & 0.269 & 0.938 & 0.460 \\
        \midrule
        Embedding-only (ordinal) & Calibrated & 0.674 & 0.627 & 0.816 \\
        Intercept-only constant & Calibrated & $-$0.000 & 1.098 & $-$0.020 \\
        \midrule
        Raw avg.\ of all judges & Uncalibrated & 0.361 & 0.877 & 0.601 \\
        Raw single-judge avg. & Uncalibrated & 0.150 & 1.007 & 0.502 \\
        \bottomrule
    \end{tabular}
\end{table*}

\textbf{Simulation on summary metrics.}
We evaluate how well several configurations support the bias and efficiency claim of the AEE estimator under a uniform sampling budget $\eta \in \{0.05, \ldots, 0.30\}$. In each Monte Carlo trial, a fraction $\eta$ of grade-10 essays is sampled uniformly at random. We run 200 independent trials per budget and report estimator bias, standard deviation (SD), and RMSE across trials.Figure~\ref{fig:persuade-dr-bias-sd-rmse} shows the AEE estimator quality as a function of $\eta$. All calibrated methods are approximately unbiased when the sampling budget is appropriate, while the uncalibrated baselines are biased. The full hybrid model mostly achieves the lowest SD and RMSE across budgets, demonstrating that a better outcome model directly translates to a more efficient estimator. Across all budgets, the reduction in estimator variance from the full model over the intercept-only baseline reflects the predictive $R^2$ gains seen in Table~\ref{tab:persuade-judge-ablation}.

\begin{figure*}[t]
    \centering
    \begin{subfigure}{\linewidth}
        \centering
        \includegraphics[width=0.8\linewidth]{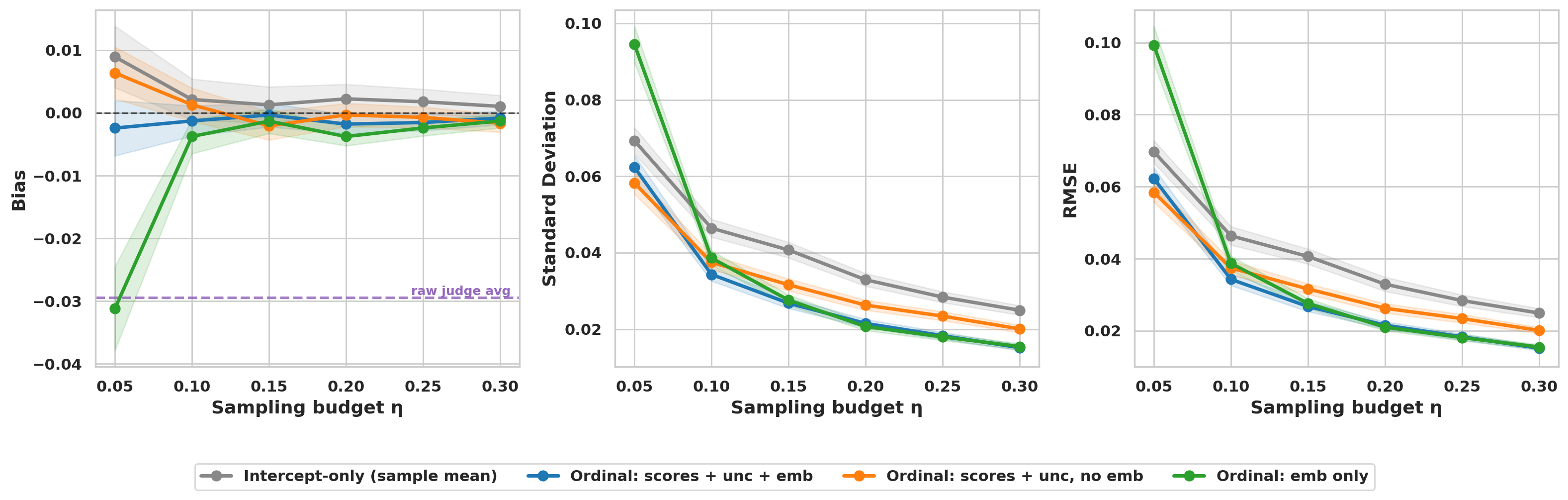}
        \caption{Estimating-equation estimator quality}
        \label{fig:persuade-dr-bias-sd-rmse}
    \end{subfigure}
    \vspace{0.5em}
    \begin{subfigure}{\linewidth}
        \centering
        \includegraphics[width=0.8\linewidth]{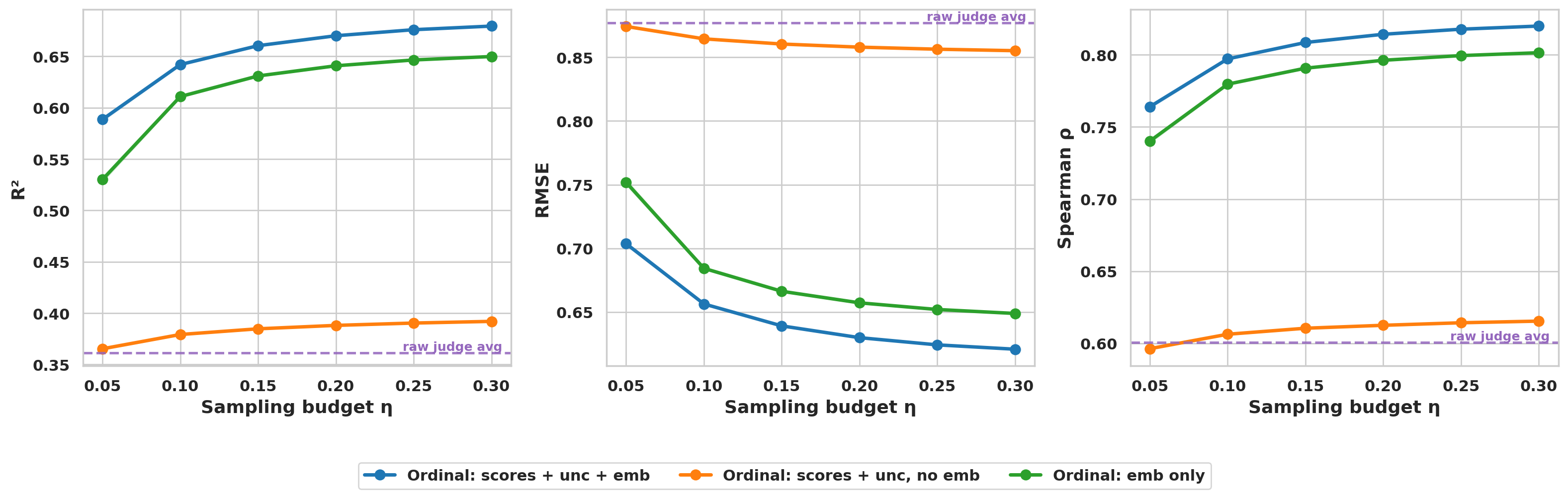}
        \caption{Outcome model quality vs. budget $\eta$.}
        \label{fig:persuade-dr-outcome-quality}
    \end{subfigure}
    \caption{PERSUADE experimental results. Top: bias, standard deviation, and RMSE as a function of $\eta$ (200 MC trials) for mean estimation. Bottom: Outcome model quality vs.\ $\eta$: out-of-fold $R^2$, RMSE, and Spearman $\rho$ fitted on the sampled essays.}
    \label{fig:persuade-dr-combined}
\end{figure*}

\textbf{Outcome model quality vs.\ budget.}
A key question is how well the outcome model itself can be estimated from the sampled essays. Figure~\ref{fig:persuade-dr-outcome-quality} traces out-of-fold $R^2$, RMSE, and Spearman $\rho$ of the judge model as $\eta$ increases, using only the sampled subset for fitting. At the smallest budget ($\eta{=}0.05$, roughly 230 essays), the full model already achieves meaningful predictive performance; quality improves steadily as more essays are observed. This confirms that the joint model provides sufficient signal for reliable outcome modelling even at modest sampling rates, supporting the practical viability of BACON in annotation-scarce settings.

\subsection{Machine translation}

We apply the BACON framework to machine translation quality evaluation using the WMT 2020 English--German MQM dataset.



\textbf{Simulation on summary metrics.}
We evaluate each outcome model as a plug-in for the AEE estimator under uniform sampling budgets $\eta \in \{0.05, \ldots, 0.30\}$ across all ten MT systems. In each of 200 Monte Carlo trials, a fraction $\eta$ of segments per system is sampled uniformly; 5-fold cross-fitting with winsorised LLM score features is used in the evaluation. Figure~\ref{fig:mqm-dr-bias} reports estimator bias, SD, and RMSE across trials.

All methods remain approximately unbiased throughout (max $|\text{bias}| < 0.02$), confirming that the AEE estimator corrects for outcome-model misspecification. The hurdle models (both full and no-emb variants) reduce estimator SD relative to the intercept-only baseline at every budget. The embedding-only hurdle is slightly less efficient than the full model at larger budgets, suggesting that LLM judge scores contribute residual predictive signal beyond text embeddings alone. Besides, all calibrated methods maintain Bonferroni-corrected CI coverage at or above the nominal 95\% level throughout, which suggests the validity of the uncertainty quantification.

\begin{figure*}[t]
    \centering
    \includegraphics[width=1.0\linewidth]{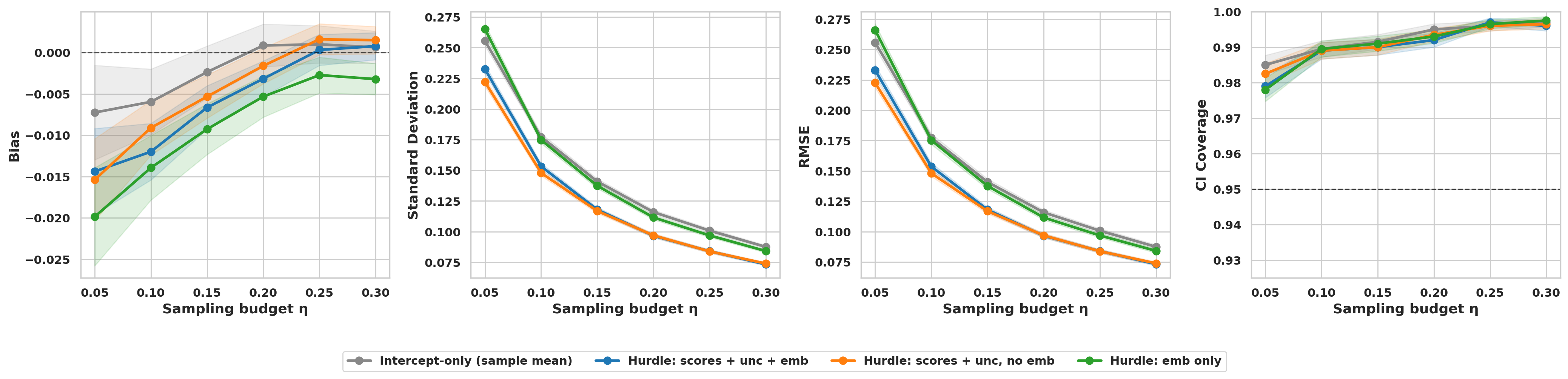}
    \caption{MQM AEE estimator quality vs.\ sampling budget $\eta$ in terms of bias, SD, and RMSE and  Bonferroni-corrected 95\% CI coverage.}
    \label{fig:mqm-dr-bias}
\end{figure*}

\textbf{Translation system comparison under budgeted sampling.}
One key practical goal is correctly ranking the MT systems by quality given limited human labels. Figure~\ref{fig:mqm-ranking-quality} reports Spearman $\rho$, Kendall $\tau$, and mean absolute rank error (MAE) of rankings derived from AEE estimates vs.\ the true system ranking under budgets. All methods produce increasingly accurate rankings as $\eta$ grows. For this goal, hurdle models with both full and multiple-judge-only configurations achieve strong performance at only small budgets. This suggests that system ranking can be achieved with limited human label and proper calibration.

\begin{figure*}[t]
    \centering
    \includegraphics[width=0.8\linewidth]{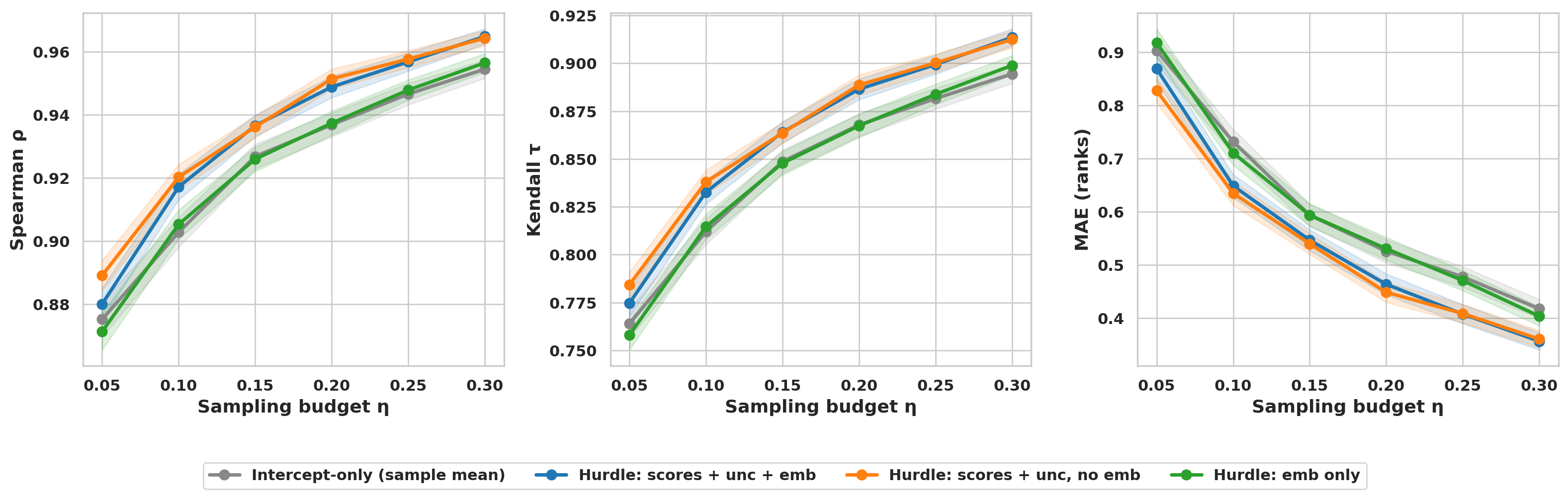}
    \caption{MQM system ranking quality vs.\ sampling budget $\eta$ (200 MC trials, 5-fold cross-fitting). Spearman $\rho$, Kendall $\tau$, and mean absolute rank error (MAE) comparing rankings estimated from AEE to the true system ranking. Hurdle models improve over the intercept baseline at all budgets.}
    \label{fig:mqm-ranking-quality}
\end{figure*}

\textbf{Additional evaluation results.} Section \ref{app:mqm} provides more details on the MQM evaluation, including the oracle performance of the outcome model and the individual item score quality.

\section{Discussion}

\textbf{What BACON adds.}
BACON is a pipeline that aims to provide better guidance for the use of AI judges in evaluation. The item-wise labeling part is closely related to LLM-as-a-judge approaches and LLM collaboration literature, while aggregated estimators are closely related to regression-adjusted survey estimators and PPI-style estimators. The contribution is to bridge these practices in an unified framework and make them operational. Our experiments suggest that this calibration layer matters: raw judge averages can be misaligned with human labels, while calibrated outcome models mitigate the bias by combining the auxiliary signals from multiple sources.

\textbf{Limitations.}
BACON's two output modes have different guarantees. Aggregate reports, such as corpus means or system-level averages, are anchored by human labels and inherit uncertainty quantification under the sampling assumptions. Item-level BACON scores are only calibrated surrogate predictions: they are useful for triage, ranking, and auditing, but should not be interpreted as unbiased measurements for each item. Moreover, calibration may degrade under distribution shift in items, prompts, model versions, or rating rubrics; practical deployment requires periodic recalibration, version logging, and residual monitoring on fresh human labels.

\textbf{Future work.}
Natural extensions include video evaluation, mobile UI assessment, and agentic task evaluation. Future work should also develop stronger item-level uncertainty tools, such as conformal prediction sets, and adaptive sampling strategies that improve efficiency without creating extreme propensities or hiding subgroup failures.

\bibliographystyle{apalike}
\bibliography{sample-base}

@article{miniukovich2023effect,
  title={The effect of prototypicality on webpage aesthetics, usability, and trustworthiness},
  author={Miniukovich, Aliaksei and Figl, Kathrin},
  journal={International Journal of Human-Computer Studies},
  volume={179},
  pages={103103},
  year={2023},
  publisher={Elsevier}
}

@book{hardin2002generalized,
  title={Generalized estimating equations},
  author={Hardin, James W and Hilbe, Joseph M},
  year={2002},
  publisher={chapman and hall/CRC}
}

@article{hamilton2025active,
  title={Active Measurement: Efficient Estimation at Scale},
  author={Hamilton, Max and Lai, Jinlin and Zhao, Wenlong and Maji, Subhransu and Sheldon, Daniel},
  journal={arXiv preprint arXiv:2507.01372},
  year={2025}
}

@article{li2025robust,
  title={Robust Sampling for Active Statistical Inference},
  author={Li, Puheng and Zrnic, Tijana and Cand{\`e}s, Emmanuel},
  journal={arXiv preprint arXiv:2511.08991},
  year={2025}
}

@article{crossley2024large,
  title={A large-scale corpus for assessing written argumentation: PERSUADE 2.0},
  author={Crossley, Scott A and Tian, Yu and Baffour, Perpetual and Franklin, Alex and Benner, Meg and Boser, Ulrich},
  journal={Assessing Writing},
  volume={61},
  pages={100865},
  year={2024},
  publisher={Elsevier}
}

@article{freitag2021experts,
  title={Experts, errors, and context: A large-scale study of human evaluation for machine translation},
  author={Freitag, Markus and Foster, George and Grangier, David and Ratnakar, Viresh and Tan, Qijun and Macherey, Wolfgang},
  journal={Transactions of the Association for Computational Linguistics},
  volume={9},
  pages={1460--1474},
  year={2021},
  publisher={MIT Press One Rogers Street, Cambridge, MA 02142-1209, USA journals-info~…}
}

@inproceedings{zhai2023sigmoid,
  title={Sigmoid loss for language image pre-training},
  author={Zhai, Xiaohua and Mustafa, Basil and Kolesnikov, Alexander and Beyer, Lucas},
  booktitle={Proceedings of the IEEE/CVF international conference on computer vision},
  pages={11975--11986},
  year={2023}
}

@article{mozer2025more,
  title={More power to you: Using machine learning to augment human coding for more efficient inference in text-based randomized trials},
  author={Mozer, Reagan and Miratrix, Luke},
  journal={The Annals of Applied Statistics},
  volume={19},
  number={1},
  pages={440--464},
  year={2025},
  publisher={Institute of Mathematical Statistics}
}

@article{chernozhukov2018double,
  title={Double/debiased machine learning for treatment and structural parameters},
  author={Chernozhukov, Victor and Chetverikov, Denis and Demirer, Mert and Duflo, Esther and Hansen, Christian and Newey, Whitney and Robins, James},
  journal={The Econometrics Journal},
  volume={21},
  number={1},
  pages={C1--C68},
  year={2018},
  publisher={Oxford University Press}
}

@article{angelopoulos2023ppi,
  title={Prediction-Powered Inference},
  author={Angelopoulos, Anastasios N. and Bates, Stephen and Fannjiang, Clara and Jordan, Michael I. and Zrnic, Tijana},
  journal={Science},
  volume={382},
  number={6671},
  pages={669--674},
  year={2023},
  publisher={American Association for the Advancement of Science}
}

@article{angelopoulos2023ppipp,
  title={{PPI}++: Efficient Prediction-Powered Inference},
  author={Angelopoulos, Anastasios N. and Duchi, John C. and Zrnic, Tijana},
  journal={arXiv preprint arXiv:2311.01453},
  year={2023}
}

@article{zrnic2024crossppi,
  title={Cross-Prediction-Powered Inference},
  author={Zrnic, Tijana},
  journal={Proceedings of the National Academy of Sciences},
  volume={121},
  number={26},
  pages={e2322083121},
  year={2024},
  publisher={National Academy of Sciences}
}

@book{sarndal1992model,
  title={Model Assisted Survey Sampling},
  author={S{\"a}rndal, Carl-Erik and Swensson, Bengt and Wretman, Jan},
  year={1992},
  publisher={Springer},
  address={New York}
}

@article{cassel1976generalized,
  title={Some Results on Generalized Difference Estimation and Generalized Regression Estimation for Finite Populations},
  author={Cassel, Claes M. and S{\"a}rndal, Carl-Erik and Wretman, Jan H.},
  journal={Biometrika},
  volume={63},
  number={3},
  pages={615--620},
  year={1976},
  publisher={Oxford University Press}
}

@article{zhang2019semisup,
  title={Semi-supervised Inference: General Theory and Estimation of Means},
  author={Zhang, Anru and Brown, Lawrence D. and Cai, T. Tony},
  journal={The Annals of Statistics},
  volume={47},
  number={5},
  pages={2538--2566},
  year={2019},
  publisher={Institute of Mathematical Statistics}
}

@article{cai2018semisup,
  title={Semi-supervised Inference for Explained Variance in High-Dimensional Linear Regression and Its Applications},
  author={Cai, T. Tony and Guo, Zijian},
  journal={Journal of the Royal Statistical Society: Series B},
  volume={82},
  number={2},
  pages={391--419},
  year={2020},
  publisher={Wiley}
}

@inproceedings{zheng2023llmjudge,
  title={Judging {LLM}-as-a-Judge with {MT}-Bench and Chatbot Arena},
  author={Zheng, Lianmin and Chiang, Wei-Lin and Sheng, Ying and Zhuang, Siyuan and Wu, Zhanghao and Zhuang, Yonghao and Lin, Zi and Li, Zhuohan and Li, Dacheng and Xing, Eric P. and Zhang, Hao and Gonzalez, Joseph E. and Stoica, Ion},
  booktitle={Advances in Neural Information Processing Systems},
  volume={36},
  year={2023}
}

@inproceedings{liu2023geval,
  title={{G-Eval}: {NLG} Evaluation using {GPT-4} with Better Human Alignment},
  author={Liu, Yang and Iter, Dan and Xu, Yichong and Wang, Shuohang and Xu, Ruochen and Zhu, Chenguang},
  booktitle={Proceedings of the 2023 Conference on Empirical Methods in Natural Language Processing},
  pages={2511--2522},
  year={2023}
}

@article{wang2023faireval,
  title={Large Language Models are Not Fair Evaluators},
  author={Wang, Peiyi and Li, Lei and Chen, Liang and Zhu, Dawei and Lin, Binghuai and Cao, Yunbo and Liu, Qi and Liu, Tianyu and Sui, Zhifang},
  journal={arXiv preprint arXiv:2305.17926},
  year={2023}
}

@article{li2024llmjudge,
  title={{LLMs}-as-Judges: A Comprehensive Survey on {LLM}-based Evaluation of {NLP}},
  author={Li, Haitao and Dong, Qian and Chen, Junjie and Su, Huawei and Hui, Yiran and Shi, Yue and Fang, Shiyu and Zhu, Xiaohui and Liu, Qingyao and Liu, Yiqun},
  journal={arXiv preprint arXiv:2402.08862},
  year={2024}
}

@article{gu2024llmjudge,
  title={A Survey on {LLM}-as-a-Judge},
  author={Gu, Jiawei and Jiang, Xuhui and Shi, Zhengshan and Tan, Hexiang and Zhai, Xuehao and Xu, Chengjin and Li, Wei and Shen, Lei and Zhu, Shengjie and Cheng, Fei and Ma, Jian},
  journal={arXiv preprint arXiv:2411.15594},
  year={2024}
}

@misc{dubois2024alpacaeval,
  title={{AlpacaEval}: An Automatic Evaluator of Instruction-following Models},
  author={Dubois, Yann and Li, Xuechen and Taori, Rohan and Zhang, Tianyi and Gulrajani, Ishaan and Ba, Jimmy and Guestrin, Carlos and Liang, Percy and Hashimoto, Tatsunori B.},
  year={2024},
  howpublished={\url{https://github.com/tatsu-lab/alpaca_eval}}
}

@inproceedings{du2024multiagentdebate,
  title={Improving Factuality and Reasoning in Language Models through Multiagent Debate},
  author={Du, Yilun and Li, Shuang and Torralba, Antonio and Tenenbaum, Joshua B. and Mordatch, Igor},
  booktitle={Proceedings of the 41st International Conference on Machine Learning},
  year={2024}
}

@inproceedings{liang2024divergent,
  title={Encouraging Divergent Thinking in Large Language Models through Multi-Agent Debate},
  author={Liang, Tian and He, Zhiwei and Jiao, Wenxiang and Wang, Xing and Wang, Yan and Wang, Rui and Yang, Yujiu and Shi, Shuming and Tu, Zhaopeng},
  booktitle={Proceedings of the 2024 Conference on Empirical Methods in Natural Language Processing},
  year={2024}
}

@article{chan2023chateval,
  title={{ChatEval}: Towards Better {LLM}-based Evaluators through Multi-Agent Debate},
  author={Chan, Chi-Min and Chen, Weize and Su, Yusheng and Yu, Jianxuan and Xue, Wei and Zhang, Shanghang and Fu, Jie and Liu, Zhiyuan},
  journal={arXiv preprint arXiv:2308.07201},
  year={2023}
}

@inproceedings{wang2023selfconsistency,
  title={Self-Consistency Improves Chain of Thought Reasoning in Language Models},
  author={Wang, Xuezhi and Wei, Jason and Schuurmans, Dale and Le, Quoc and Chi, Ed and Narang, Sharan and Chowdhery, Aakanksha and Zhou, Denny},
  booktitle={International Conference on Learning Representations},
  year={2023}
}

@article{verga2024jury,
  title={Replacing Judges with Juries: Evaluating {LLM} Generations with a Panel of Diverse Models},
  author={Verga, Pat and Hofst{\"a}tter, Sebastian and Althammer, Sophia and Su, Yixuan and Piktus, Aleksandra and Arkil, Niklas and Fer{\~a}o, Pedro and Razavi, Manzil Zaheer and Saeidi, Sobhi and Ruder, Sebastian and Roth, Patrick},
  journal={arXiv preprint arXiv:2404.18796},
  year={2024}
}

@article{qian2025multijudge,
  title={Enhancing {LLM}-as-a-Judge via Multi-Agent Collaboration},
  author={Qian, Yuqing and Zhang, Shenghua and Zhou, Yupeng and Balakrishnan, Anusha and Li, Jian and Jauhar, Sujay Kumar and Kannan, Anand and Tian, Ran},
  journal={arXiv preprint arXiv:2501.05366},
  year={2025}
}

@inproceedings{zhu2023judgelm,
  title={{JudgeLM}: Fine-tuned Large Language Models are Scalable Judges},
  author={Zhu, Lianghui and Wang, Xinggang and Wang, Xinlong},
  booktitle={International Conference on Learning Representations},
  year={2025}
}

@inproceedings{kim2024prometheus,
  title={Prometheus: Inducing Fine-grained Evaluation Capability in Language Models},
  author={Kim, Seungone and Shin, Jamin and Cho, Yejin and Han, Joel and Cha, Sejune and Lee, Moontae and Seo, Minjoon},
  booktitle={International Conference on Learning Representations},
  year={2024}
}

@article{kim2024prometheus2,
  title={Prometheus 2: An Open Source Language Model Specialized in Evaluating Other Language Models},
  author={Kim, Seungone and Suk, Juyoung and Longpre, Shayne and Lin, Bill Yuchen and Shin, Jamin and Welleck, Sean and Neubig, Graham and Lee, Moontae and Lee, Kyungjae and Seo, Minjoon},
  journal={arXiv preprint arXiv:2405.01535},
  year={2024}
}

@inproceedings{wang2024pandalm,
  title={{PandaLM}: An Automatic Evaluation Benchmark for {LLM} Instruction Tuning Optimization},
  author={Wang, Yidong and Yu, Zhuohao and Zeng, Zhengran and Yang, Linyi and Wang, Cunxiang and Chen, Hao and Jiang, Chaoya and Xie, Rui and Wang, Jindong and Xie, Xing and Ye, Wei and Zhang, Shikun and Zhang, Yue},
  booktitle={International Conference on Learning Representations},
  year={2024}
}

@article{li2023generativejudge,
  title={Generative Judge for Evaluating Alignment},
  author={Li, Junlong and Sun, Shichao and Yuan, Weizhe and Fan, Run-Ze and Zhao, Hai and Liu, Pengfei},
  journal={arXiv preprint arXiv:2310.05470},
  year={2023}
}

@inproceedings{huang2025finetunedjudge,
  title={An Empirical Study of {LLM}-as-a-Judge for {LLM} Evaluation: Fine-tuned Judge Model is not a General Substitute for {GPT-4}},
  author={Huang, Hui and Bu, Xinrun and Zhou, Haoran and Li, Ke and Fan, Xinyu and Guo, Yan and Fei, Hao and He, Qingyu and Li, Jing and Mi, Fei},
  booktitle={Findings of the Association for Computational Linguistics: ACL 2025},
  year={2025}
}

\newpage 
\appendix

\section{Additional Evaluation Results}
\subsection{More on MQM evaluation}\label{app:mqm}
\paragraph{Importance of fitting the appropriate outcome model.}
Table~\ref{tab:mqm-judge-ablation} reports pooled 5-fold OOF metrics for predicting human MQM scores. Features include per-LLM mean scores and variance-based uncertainty, with embeddings from a large text embedding model. Because MQM scores are non-negative with a substantial zero mass ($\sim$13\% pooled, up to 29\% for human translations), we compare the standard Ridge model against a \emph{Hurdle model} (logistic gate predicting $P(\text{MQM}>0)$ combined with Ridge on the positive subset). We also compare two variance encodings: the original $\log(\mathrm{Var}+\epsilon)$ transform versus $\sqrt{\mathrm{Var}}$ with a binary has-variance indicator, since 57--67\% of items have zero LLM variance (all raters agree), creating an artificial bimodal cliff at $\log(10^{-8})\approx -18.4$ in the log encoding.

\begin{table*}[ht!]
    \centering
    \caption{MQM pooled judge-model ablation (5-fold OOF, all 10 systems, $n{=}14{,}180$). The Hurdle model with $\sqrt{\mathrm{Var}}$ encoding achieves the best $R^2$ while eliminating negative predictions.}
    \label{tab:mqm-judge-ablation}
    \begin{tabular}{lcc}
        \toprule
        Configuration & $R^2$ & Spearman \\
        \midrule
        Full (sqrt, hurdle) & \textbf{0.369} & \textbf{0.657} \\
        Full (sqrt, ridge) & 0.355 & 0.646 \\
        Full (log, hurdle) & 0.365 & 0.646 \\
        Full (log, ridge) & 0.351 & 0.634 \\
        \midrule
        LLM scores only (sqrt, ridge) & 0.287 & 0.557 \\
        Embedding-only ridge & 0.167 & 0.497 \\
        Intercept-only baseline & 0.000 & 0.000 \\
        \bottomrule
    \end{tabular}
\end{table*}

The $\sqrt{\mathrm{Var}}$ encoding resolves pathological failures on high-quality systems: for \texttt{Human-B.0} (29\% zeros), Ridge with log-variance yields $R^2=-0.14$ due to extreme predictions driven by the $-18.4$ floor, whereas $\sqrt{\mathrm{Var}}$ with Hurdle achieves $R^2=0.24$. For MT systems with fewer zeros, both encodings perform similarly.

\paragraph{Per-system ablation.}
Table~\ref{tab:mqm-per-system} reports the Hurdle model with $\sqrt{\mathrm{Var}}$ encoding across all ten translation systems and the pooled corpus. The full model (LLM + uncertainty + embeddings) consistently dominates the LLM-only and embedding-only ablations. LLM scores alone explain much of the variance for MT systems ($R^2$ 0.36--0.44) but substantially less for human translations ($R^2$ 0.15--0.21), where inter-rater noise is higher. Embeddings alone provide a complementary signal ($R^2$ 0.10--0.20) that is especially valuable for the human-translation systems where LLM scores are weakest. Combining both sources yields consistent gains across all systems.

\begin{table*}[t]
    \centering
    \caption{Per-system MQM judge-model performance (5-fold OOF, Hurdle model, $\sqrt{\mathrm{Var}}$ encoding). $R^2$ and Spearman~$\rho$ are reported for each feature ablation.}
    \label{tab:mqm-per-system}
    \begin{tabular}{l cc cc cc cc}
        \toprule
        & \multicolumn{2}{c}{Full} & \multicolumn{2}{c}{LLM only} & \multicolumn{2}{c}{Embedding only} & \multicolumn{2}{c}{Intercept only} \\
        \cmidrule(lr){2-3} \cmidrule(lr){4-5} \cmidrule(lr){6-7} \cmidrule(lr){8-9}
        System & $R^2$ & $\rho$ & $R^2$ & $\rho$ & $R^2$ & $\rho$ & $R^2$ & $\rho$ \\
        \midrule
        Human-A.0            & 0.273 & 0.554 & 0.191 & 0.427 & 0.150 & 0.448 & 0.000 & 0.000 \\
        Human-B.0            & 0.236 & 0.516 & 0.150 & 0.359 & 0.139 & 0.452 & 0.000 & 0.000 \\
        Human-P.0            & 0.328 & 0.581 & 0.207 & 0.478 & 0.197 & 0.491 & 0.000 & 0.000 \\
        Huoshan\_Translate    & 0.385 & 0.660 & 0.384 & 0.644 & 0.105 & 0.398 & 0.000 & 0.000 \\
        OPPO                 & 0.425 & 0.691 & 0.388 & 0.649 & 0.160 & 0.470 & 0.000 & 0.000 \\
        Online-A             & 0.441 & 0.703 & 0.391 & 0.691 & 0.112 & 0.375 & 0.000 & 0.000 \\
        Online-B             & 0.369 & 0.646 & 0.399 & 0.641 & 0.116 & 0.365 & 0.000 & 0.000 \\
        Tencent\_Translation  & 0.435 & 0.690 & 0.398 & 0.642 & 0.160 & 0.461 & 0.000 & 0.000 \\
        Tohoku-AIP-NTT       & 0.377 & 0.656 & 0.360 & 0.636 & 0.130 & 0.405 & 0.000 & 0.000 \\
        eTranslation         & 0.449 & 0.707 & 0.436 & 0.673 & 0.130 & 0.441 & 0.000 & 0.000 \\
        \midrule
        Pooled               & 0.369 & 0.657 & 0.287 & 0.557 & 0.167 & 0.497 & 0.000 & 0.000 \\
        \bottomrule
    \end{tabular}
\end{table*}

\subsection{WebDesign evaluation}\label{app:webdesign}
For the Universities subset, we ran the same judge ablation protocol across six outcomes (Aesthetics, Trustworthiness, Typicality, Exemplar Goodness, Family Resemblance, Usability), including a constant intercept-only baseline. Table~\ref{tab:webdesign-unis-ablation} reports average 5-fold OOF metrics across outcomes. The full hybrid model (VLM scores + uncertainty + SigLIP intercept) achieves the best overall performance, improving over both the VLM-only and SigLIP-only ablations. As uncalibrated baselines, using raw VLM scores directly yields highly negative $R^2$ (due to a scale mismatch: human ratings are z-scored while VLM scores are on their original integer scale), though Spearman correlation remains moderate (0.56), confirming that outcome-model calibration is essential in this domain.

\begin{table*}[ht!]
    \centering
    \caption{WebDesign (Universities) judge-model ablation results. Values are averaged across six outcomes. Lower RMSE is better; higher $R^2$ and Spearman are better. Uncalibrated rows use raw VLM scores directly (no outcome-model fitting); note that human ratings are z-scored while VLM scores are on their original integer scale, so scale mismatch explains the negative $R^2$.}
    \label{tab:webdesign-unis-ablation}
    \begin{tabular}{lcccc}
        \toprule
        Configuration & Type & Avg. $R^2$ & Avg. RMSE & Avg. Spearman \\
        \midrule
        VLM scores + uncertainty + SigLIP & Calibrated & 0.555 & 0.462 & 0.727 \\
        VLM scores + uncertainty (no SigLIP) & Calibrated & 0.390 & 0.544 & 0.591 \\
        SigLIP-only & Calibrated & 0.497 & 0.492 & 0.690 \\
        Intercept-only constant & Calibrated & $-$0.001 & 0.699 & $-$0.047 \\
        \midrule
        Raw avg.\ of all judges & Uncalibrated & $-$9.719 & 2.258 & 0.561 \\
        Raw single-judge avg. & Uncalibrated & $-$10.151 & 2.291 & 0.340 \\
        \bottomrule
    \end{tabular}
\end{table*}

\paragraph{Simulation on summary metrics.}
We evaluate each configuration as an AEE outcome model under uniform sampling budgets $\eta \in \{0.05, \ldots, 0.30\}$ across all six perceptual outcomes. In each trial a fraction $\eta$ of websites is sampled uniformly; the AEE estimator combines population-level predictions from the judge model with an in-sample bias correction. Figures~\ref{fig:webdesign-dr-bias} and~\ref{fig:webdesign-dr-outcome} report estimator and outcome-model quality per outcome and on average.

\paragraph{Estimator quality (Figure~\ref{fig:webdesign-dr-bias}).}
All models are approximately unbiased across budgets. The full hybrid model (VLM scores + uncertainty + SigLIP intercept) achieves the lowest standard deviation and RMSE for most outcomes, confirming that the variance-reduction gains from the judge model carry through to the AEE estimator. Perceptual outcomes vary considerably in difficulty: Aesthetics and Usability are better predicted than Exemplar Goodness and Typicality, and this ordering is reflected in estimator efficiency. The AVG panel shows that the full model consistently outperforms both the VLM-only and SigLIP-only ablations when averaged across outcomes.

\begin{figure*}[t]
    \centering
    \begin{subfigure}[b]{0.49\linewidth}
        \includegraphics[width=\linewidth]{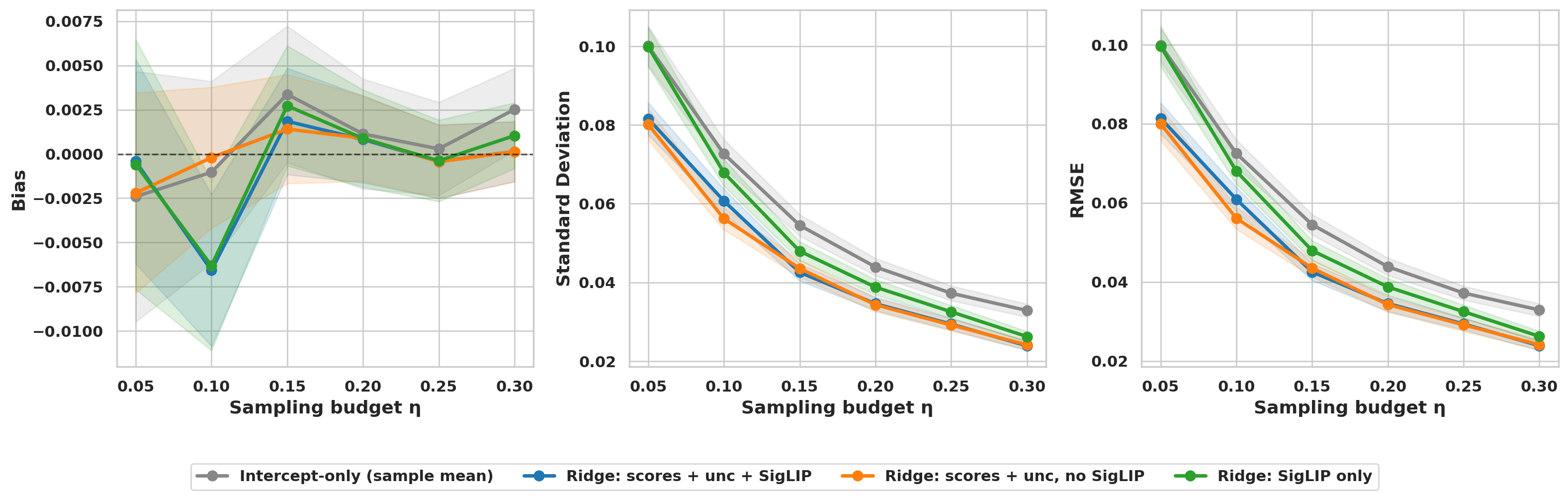}
        \caption{Aesthetics}
    \end{subfigure}\hfill
    \begin{subfigure}[b]{0.49\linewidth}
        \includegraphics[width=\linewidth]{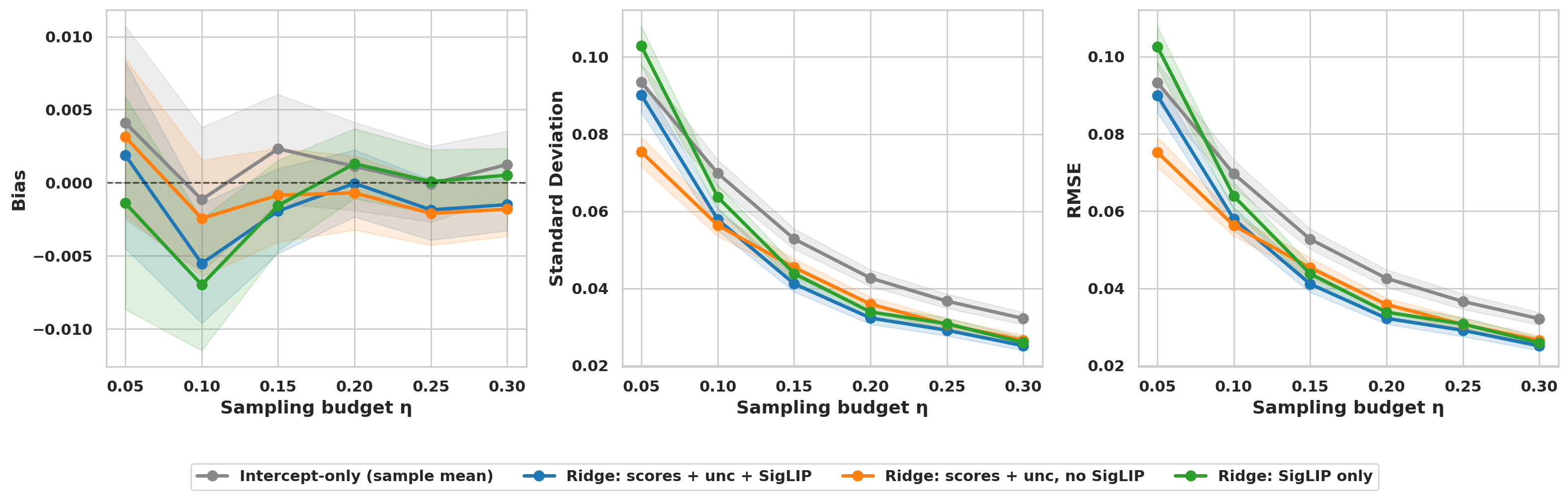}
        \caption{Trustworthiness}
    \end{subfigure}
    \begin{subfigure}[b]{0.49\linewidth}
        \includegraphics[width=\linewidth]{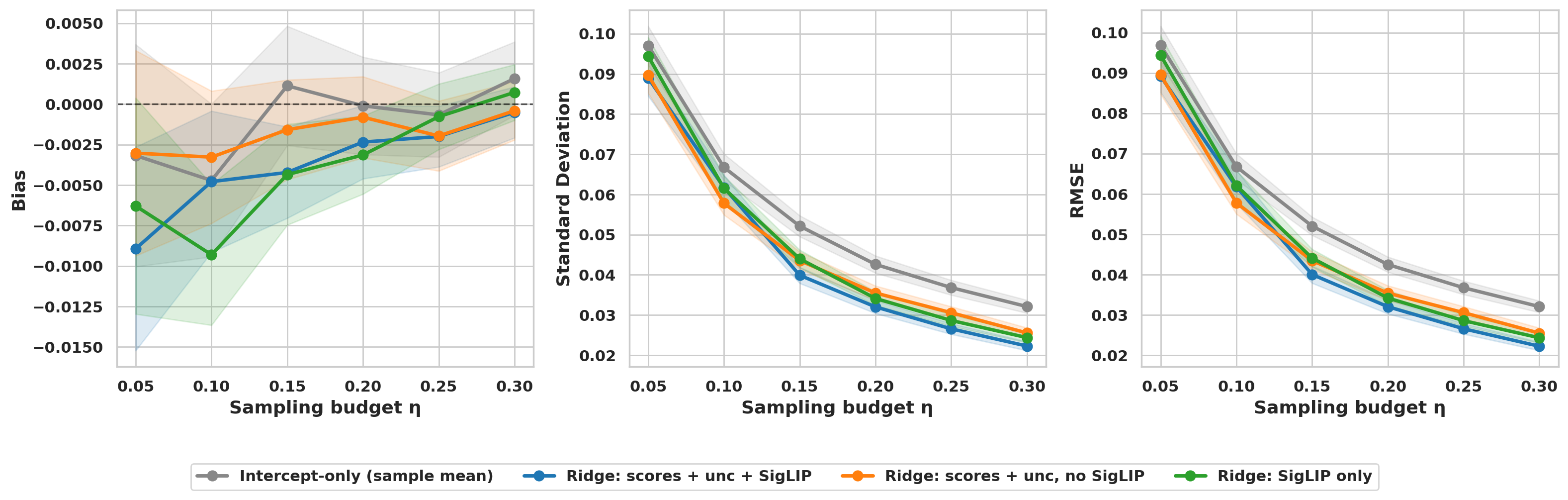}
        \caption{Typicality}
    \end{subfigure}\hfill
    \begin{subfigure}[b]{0.49\linewidth}
        \includegraphics[width=\linewidth]{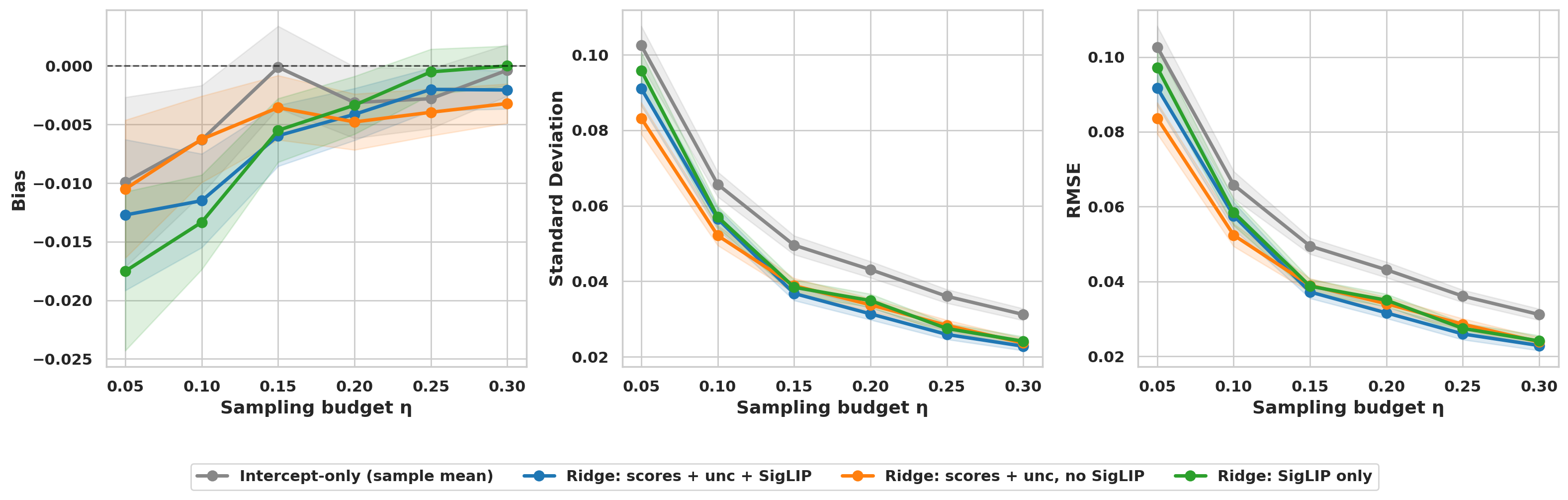}
        \caption{Exemplar Goodness}
    \end{subfigure}
    \begin{subfigure}[b]{0.49\linewidth}
        \includegraphics[width=\linewidth]{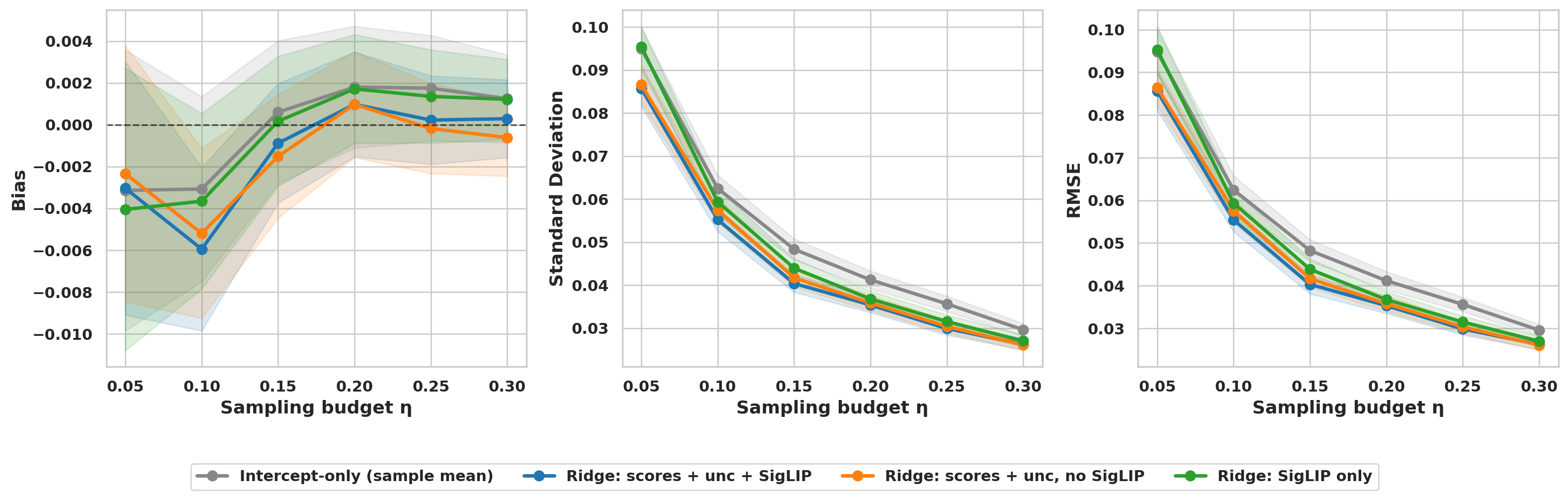}
        \caption{Usability}
    \end{subfigure}\hfill
    \begin{subfigure}[b]{0.49\linewidth}
        \includegraphics[width=\linewidth]{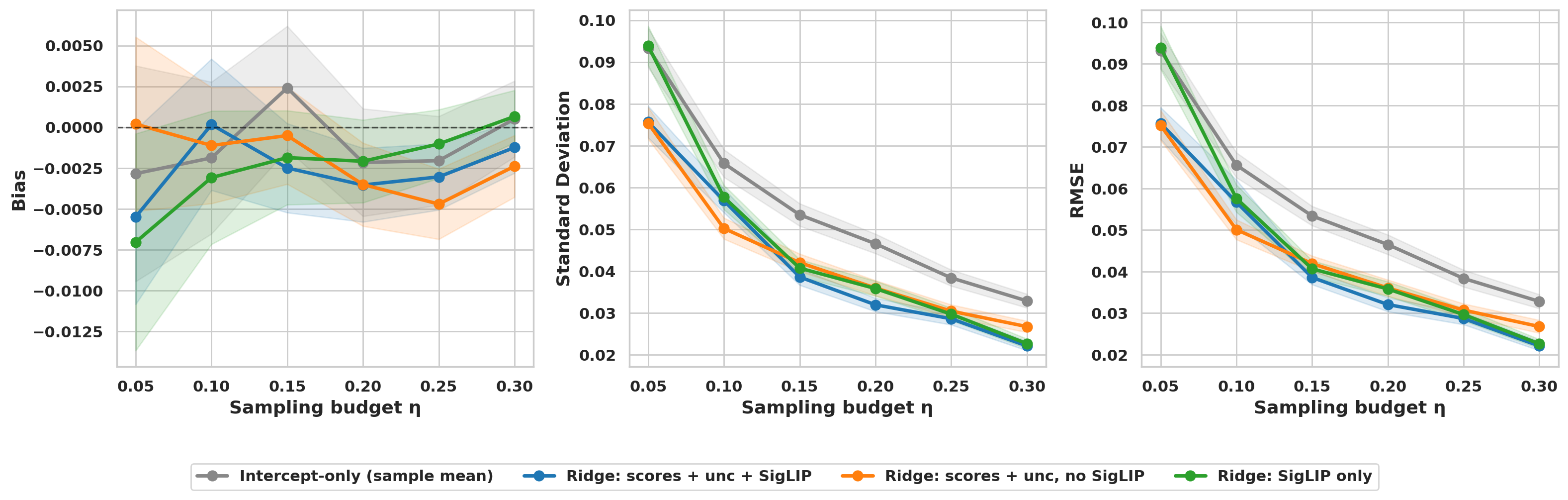}
        \caption{Average (across all outcomes)}
    \end{subfigure}
    \caption{WebDesign (Universities) AEE estimator quality: bias, SD, and RMSE vs.\ sampling budget $\eta$, per perceptual outcome and averaged. Lower is better. The full hybrid model achieves the lowest estimator variance for most outcomes.}
    \label{fig:webdesign-dr-bias}
\end{figure*}

\paragraph{Outcome model quality (Figure~\ref{fig:webdesign-dr-outcome}).}
Figure~\ref{fig:webdesign-dr-outcome} traces the out-of-fold predictive performance of each outcome model as a function of $\eta$. Even at the smallest budget ($\eta{=}0.05$, roughly 30--40 websites), the full model achieves nontrivial $R^2$ and Spearman correlation for several outcomes, demonstrating that VLM judge scores and SigLIP embeddings together provide a strong prior that makes efficient use of limited human labels. Outcome-model quality improves consistently with budget, and the full hybrid model leads on $R^2$ and Spearman for most outcomes, matching the pattern observed in the static ablation (Table~\ref{tab:webdesign-unis-ablation}).

\begin{figure*}[t]
    \centering
    \begin{subfigure}[b]{0.49\linewidth}
        \includegraphics[width=\linewidth]{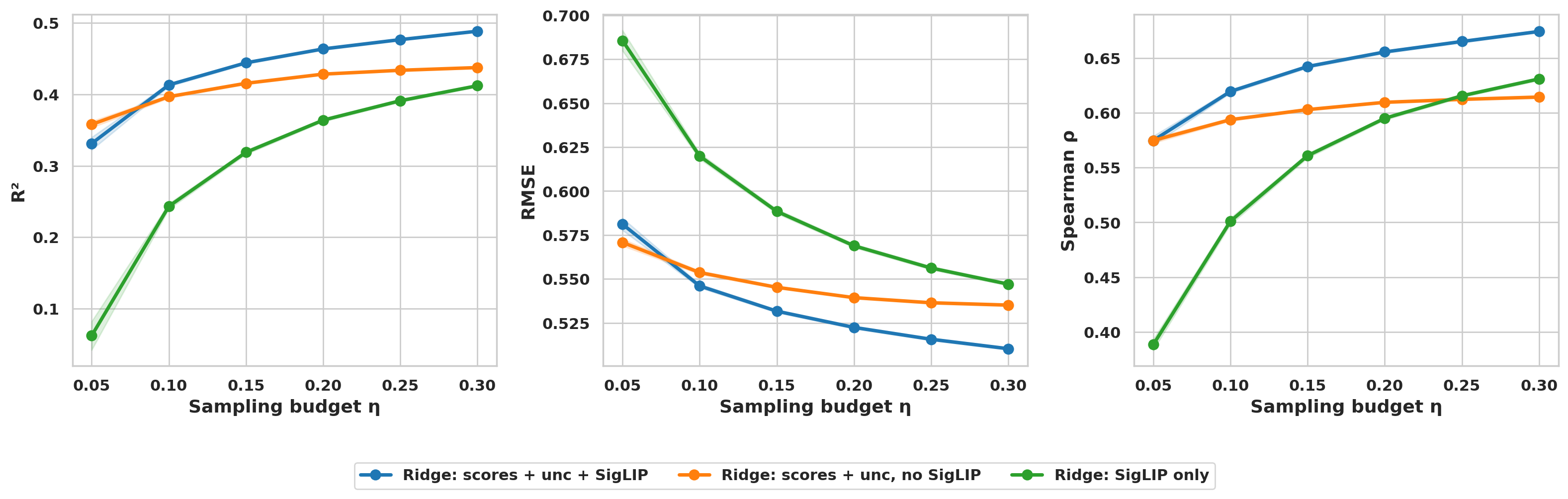}
        \caption{Aesthetics}
    \end{subfigure}\hfill
    \begin{subfigure}[b]{0.49\linewidth}
        \includegraphics[width=\linewidth]{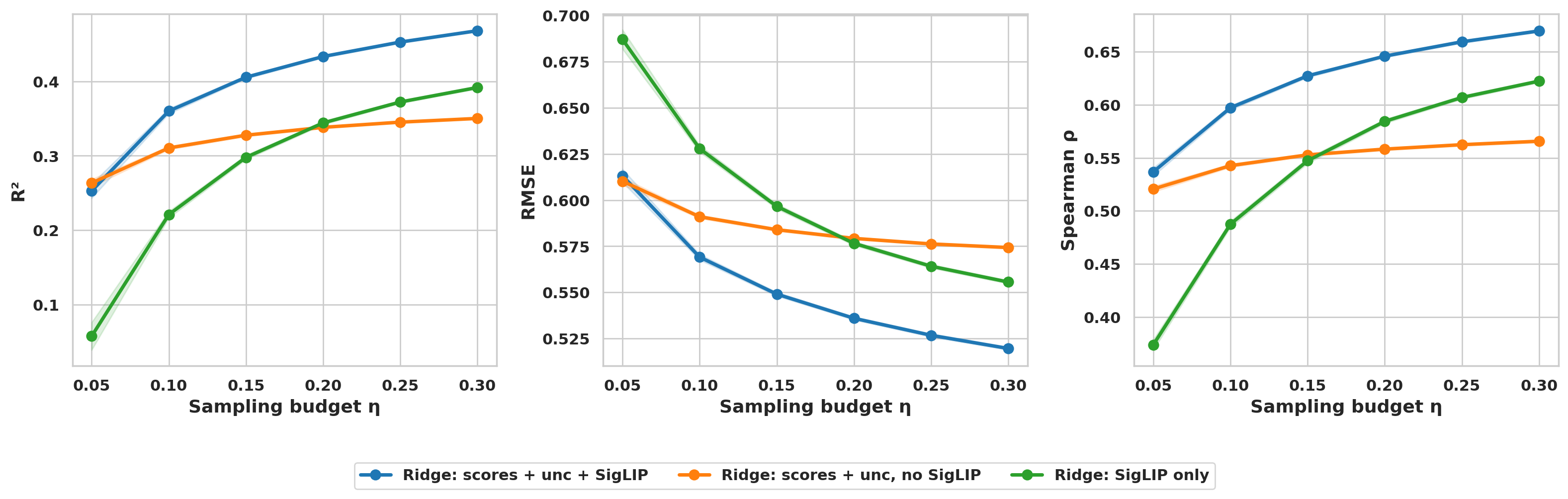}
        \caption{Trustworthiness}
    \end{subfigure}
    \begin{subfigure}[b]{0.49\linewidth}
        \includegraphics[width=\linewidth]{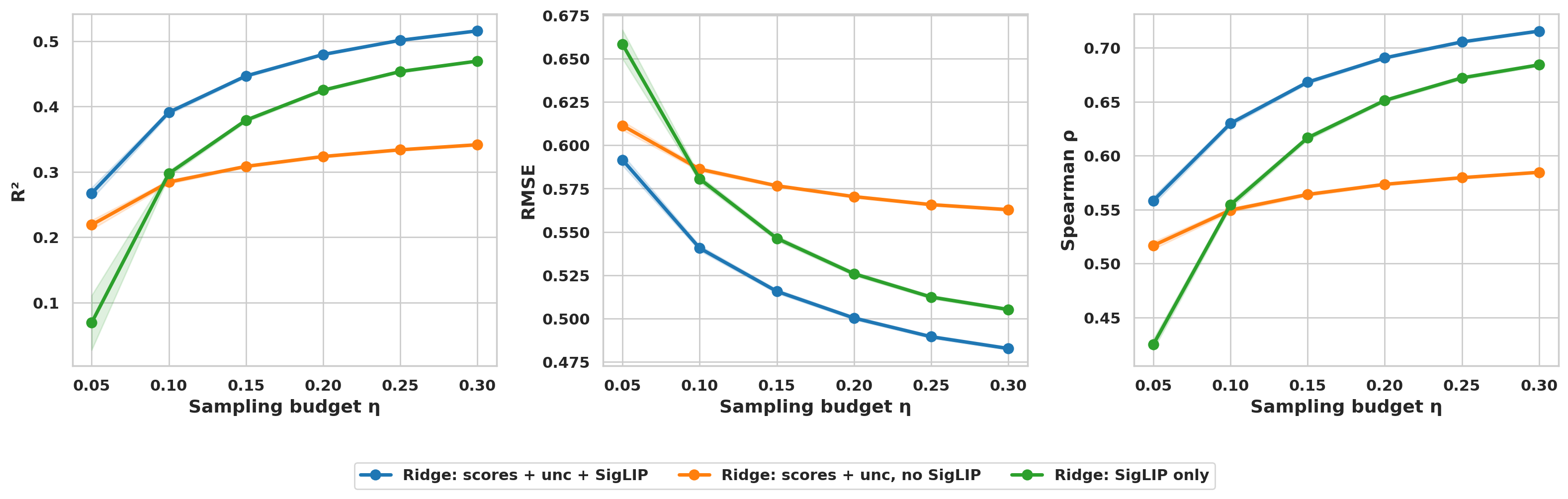}
        \caption{Typicality}
    \end{subfigure}\hfill
    \begin{subfigure}[b]{0.49\linewidth}
        \includegraphics[width=\linewidth]{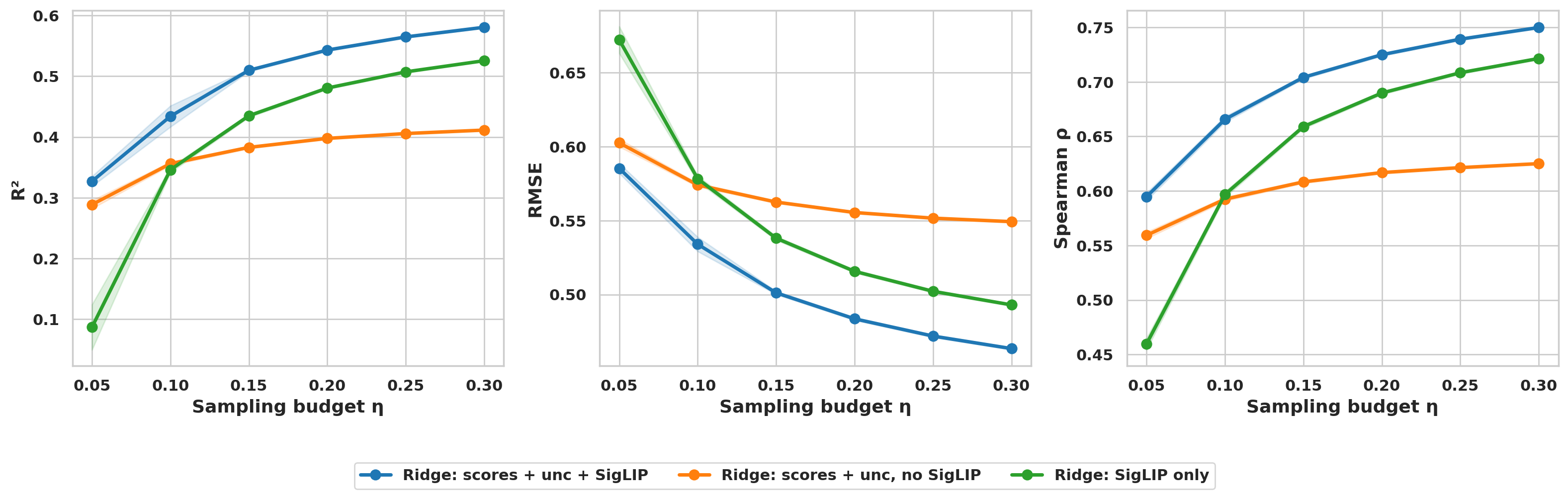}
        \caption{Exemplar Goodness}
    \end{subfigure}
    \begin{subfigure}[b]{0.49\linewidth}
        \includegraphics[width=\linewidth]{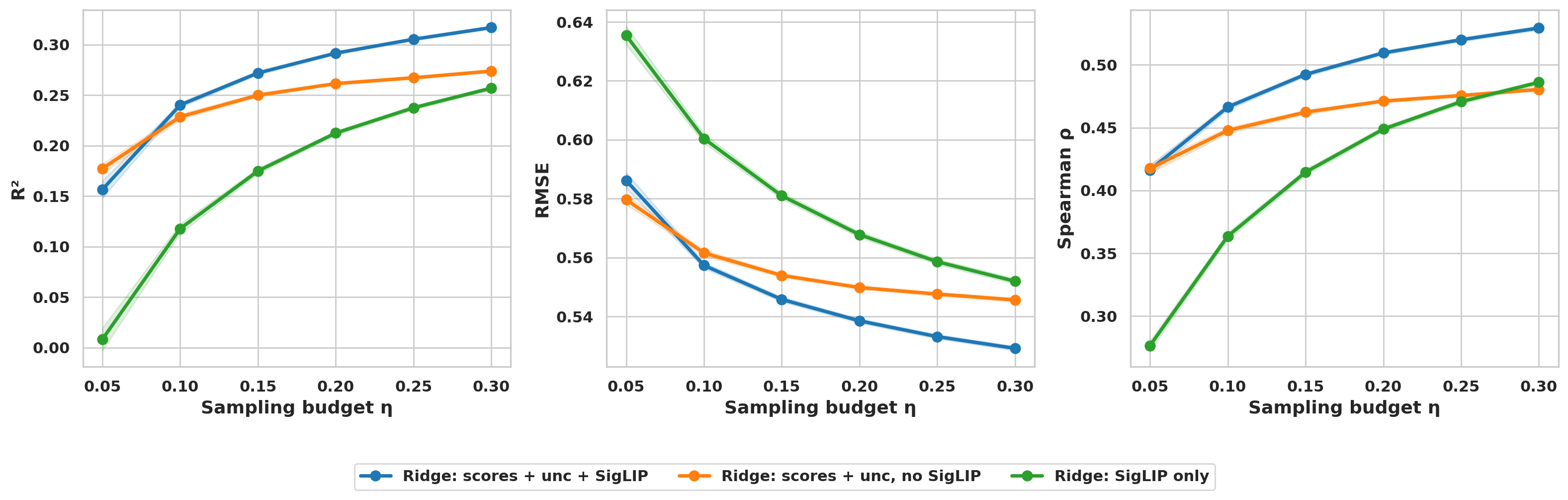}
        \caption{Usability}
    \end{subfigure}\hfill
    \begin{subfigure}[b]{0.49\linewidth}
        \includegraphics[width=\linewidth]{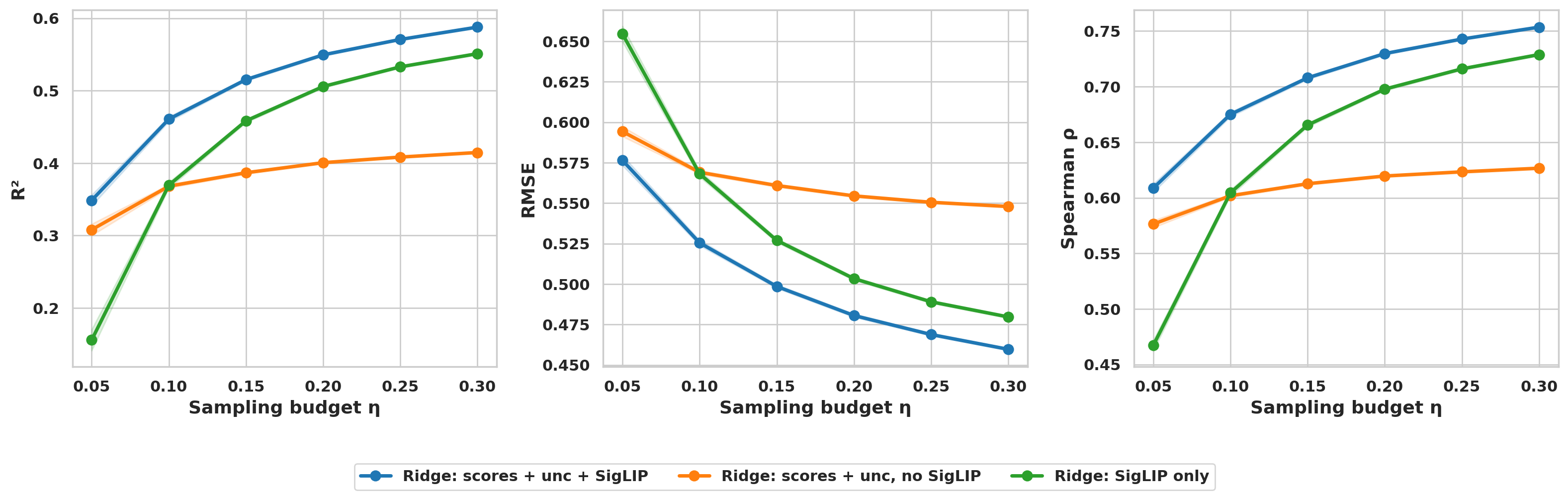}
        \caption{Average (across all outcomes)}
    \end{subfigure}
    \caption{WebDesign (Universities) outcome model quality vs.\ sampling budget $\eta$: out-of-fold $R^2$, RMSE, and Spearman $\rho$ per perceptual outcome and averaged. Higher $R^2$/Spearman and lower RMSE are better. The full model leads on most outcomes.}
    \label{fig:webdesign-dr-outcome}
\end{figure*}

\section{Model-Judge Scoring Protocol}\label{app:prompt}
For each item, we query a panel of language or vision-language judges through a unified API endpoint. Prompts enforce strict scalar outputs to reduce parse ambiguity and improve comparability across models. We use fixed templates:

\textbf{Essay scoring (LLMs).}
\begin{quote}\ttfamily
You are an essay grader. You will be given a student essay and its writing prompt.\\
Rate the essay's holistic quality on a scale of 1 to 6:\\
1 = Very poor, 2 = Poor, 3 = Below average, 4 = Above average, 5 = Good, 6 = Excellent.\\
You MUST respond with ONLY a single digit (1, 2, 3, 4, 5, or 6).\\
No explanation, no reasoning, no other text - just the number.
\end{quote}
\begin{quote}\ttfamily
Essay prompt: \{prompt\_name\}\\
\\
Essay:\\
\{full\_text\}
\end{quote}
Each judge receives this system and user text; we take the first substring matching a digit in $\{1,\dots,6\}$ as the score (same rule as in the scoring script).

\textbf{Web-design scoring (VLMs).}
\begin{quote}\ttfamily
You are a strict visual evaluator. Return only the requested six lines.
\end{quote}
\begin{quote}\ttfamily
Rate this screenshot from a \{domain\_label\} on six dimensions.\\
Return ONLY six integer scores in [-3, -2, -1, 0, 1, 2, 3], one per line, exactly:\\
AE: <score>\\
TRU: <score>\\
TYP: <score>\\
EXMPL: <score>\\
AVG: <score>\\
US: <score>\\
\par
Definitions:\\
- AE (Aesthetics): visually appealing overall design.\\
- TRU (Trustworthiness): appears credible and trustworthy.\\
- TYP (Typicality): \{prompt\_typ\}\\
- EXMPL (Exemplar Goodness): \{prompt\_exmpl\}\\
- AVG (Family Resemblance): \{prompt\_avg\}\\
- US (Usability): easy to navigate and understand.
\end{quote}
The placeholders \{domain\_label\}, \{prompt\_typ\}, \{prompt\_exmpl\}, and \{prompt\_avg\} are filled from a domain table (university/college, commercial-bank, online fashion-shopping, online homeware-shopping) matching our dataset splits. Each judge receives this user text together with one screenshot (system role as above); responses are parsed as six lines of the form \texttt{AE:} $\ldots$, \texttt{TRU:} $\ldots$, etc.

\textbf{Machine translation scoring (LLMs).}
\begin{quote}\ttfamily
You are a professional MQM annotator following the official WMT MQM human evaluation guidelines at the segment level.\\
Identify up to five target-side errors, each with a category (Accuracy, Fluency, Terminology, Style, Locale conventions, or Other, including subcategories such as Mistranslation or Grammar; also Source error or Non-translation when applicable), a severity (Major, Minor, or Neutral), and the exact span from the source or hypothesis.\\
If the translation is too garbled or unrelated to the source, return exactly one error: Non-translation with severity Major.\\
Source errors are listed separately and do not count toward the five target-error limit.\\
Do not double-count the same underlying issue; when multiple issues overlap one span, keep only the most severe; when severities tie, prefer Accuracy $>$ Fluency $>$ Terminology $>$ Style $>$ Locale conventions $>$ Other.\\
Be conservative and return JSON only (no prose outside the object).
\end{quote}
\begin{quote}\ttfamily
[Source]\\
\{source\}\\
\par
[Hypothesis]\\
\{hypothesis\}\\
\par
Return JSON only: a single object with an errors array; each entry has string fields category, severity, and span.
\end{quote}
Each judge receives the source segment and the system hypothesis (machine translation) and returns structured segment-level MQM annotations in that JSON format.

For every API response, we log token-level statistics (including per-token log probabilities and top alternatives when available) and derive uncertainty summaries such as mean token entropy and perplexity. Failed requests are retried with bounded concurrency and backoff; unresolvable failures are marked \texttt{NA} and excluded from successful-completion counts during resume runs.





\section{Additional results}

\subsection{Properties of the estimating equation estimator}\label{app:ee}
In this section, we will present the theoretical properties of the estimating equation estimator.

\begin{theorem}[Consistency and asymptotic normality of the AEE estimator]\label{thm:consistency}
Let $\theta^\star \in \Theta \subset \bbR^p$ be the unique solution to
\[
\Psi(\theta,\gamma^\star) := \E{m(Y,S,X;\theta,\gamma^\star)} = 0.
\]
Assume:
\begin{enumerate}[leftmargin=*]
    \item \textbf{Identification.} $\theta^\star$ is the unique zero of $\Psi(\theta,\gamma^\star)$ in a neighborhood of the truth.
    \item \textbf{Positivity and MAR.} There exists $\epsilon>0$ such that $\pi^\star(X)\ge \epsilon$ almost surely, and $S \perp Y \mid X$.
    \item \textbf{Smoothness.} The map $\theta \mapsto m(Y,S,X;\theta,\gamma)$ is continuously differentiable near $\theta^\star$, and
    \[
    A := \E{\nabla_\theta m(Y,S,X;\theta^\star,\gamma^\star)}
    \]
    exists and is nonsingular.
    \item \textbf{Nuisance estimation.} $\|\hat{\gamma}-\gamma^\star\|=o_p(1)$ and
    \[
    \mathbb{E}_N\!\left[m(Y,S,X;\theta^\star,\hat{\gamma})\right]
    -
    \mathbb{E}_N\!\left[m(Y,S,X;\theta^\star,\gamma^\star)\right]
    =
    o_p(N^{-1/2}).
    \]
    This holds, for example, under Neyman orthogonality together with standard consistency rates for the nuisance estimators.
    \item \textbf{Empirical process control.} A uniform law of large numbers and a central limit theorem hold for the relevant class of score functions in a neighborhood of $(\theta^\star,\gamma^\star)$.
\end{enumerate}
If $\hat{\theta}$ satisfies
\[
\mathbb{E}_N\!\left[m(Y,S,X;\hat{\theta},\hat{\gamma})\right] = o_p(N^{-1/2}),
\]
then $\hat{\theta}\overset{p}{\to}\theta^\star$. Moreover,
\[
\sqrt{N}(\hat{\theta}-\theta^\star)
=
-A^{-1}\frac{1}{\sqrt{N}}\sum_{i=1}^N m(Y_i,S_i,X_i;\theta^\star,\gamma^\star) + o_p(1),
\]
and hence
\[
\sqrt{N}(\hat{\theta}-\theta^\star)\rightsquigarrow \mathcal{N}(0,\Sigma_\theta),
\qquad
\Sigma_\theta = A^{-1} B A^{-T},
\]
where
\[
B = \E{m(Y,S,X;\theta^\star,\gamma^\star)m(Y,S,X;\theta^\star,\gamma^\star)^\top}.
\]
\end{theorem}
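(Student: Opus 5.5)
The plan is the standard two-step Z-estimator argument: first consistency via uniform convergence, then a linearization around $\theta^\star$. Write $\Psi_N(\theta,\gamma) := \mathbb{E}_N[m(Y,S,X;\theta,\gamma)]$.

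\textbf{Consistency.} By the uniform law of large numbers in Assumption 5, together with $\hat\gamma \to \gamma^\star$ in probability (Assumption 4), I will show that
\[
\sup_{\theta}\bigl\|\Psi_N(\theta,\hat\gamma)-\Psi(\theta,\gamma^\star)\bigr\| = o_p(1)
\]
over a compact neighborhood of $\theta^\star$. To pass from $\Psi(\theta,\hat\gamma)$ to $\Psi(\theta,\gamma^\star)$, I would use continuity of $\gamma\mapsto\Psi(\theta,\gamma)$, which I will state as part of the regularity in Assumption 5. Since $\Psi_N(\hat\theta,\hat\gamma)=o_p(N^{-1/2})$, it follows that $\Psi(\hat\theta,\gamma^\star)=o_p(1)$. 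Identification (Assumption 1) makes $\theta^\star$ a well-separated zero: by continuity and uniqueness on the compact neighborhood, $\inf_{\|\theta-\theta^\star\|\ge\delta}\|\Psi(\theta,\gamma^\star)\|>0$. Hence $\hat\theta\to\theta^\star$ in probability, as in the usual argmin/Z-estimator consistency theorem (van der Vaart, Thm.~5.9). Positivity (Assumption 2) enters only to guarantee that the inverse-propensity terms $S/\pi(X)$ are bounded by $1/\epsilon$, so the score class has integrable envelopes.

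\textbf{Linearization.} I decompose
\[
0 \approx \Psi_N(\hat\theta,\hat\gamma) = \Psi_N(\theta^\star,\gamma^\star) + \bigl[\Psi_N(\theta^\star,\hat\gamma)-\Psi_N(\theta^\star,\gamma^\star)\bigr] + \bigl[\Psi_N(\hat\theta,\hat\gamma)-\Psi_N(\theta^\star,\hat\gamma)\bigr].
\]
\begin{itemize}
\item The middle bracket is $o_p(N^{-1/2})$ directly by Assumption 4.
\item For the last bracket, the mean value theorem applied componentwise, using Assumption 3, gives $\bar A_N(\hat\theta-\theta^\star)$, where $\bar A_N=\mathbb{E}_N[\nabla_\theta m(\cdot;\tilde\theta,\hat\gamma)]$ for intermediate points $\tilde\theta$. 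The ULLN for the derivative class, continuity of the derivative, and consistency of $(\hat\theta,\hat\gamma)$ give $\bar A_N\to A$ in probability, so $\bar A_N$ is invertible with probability tending to one.
\end{itemize}
Rearranging yields
\[
\sqrt N(\hat\theta-\theta^\star) = -\bar A_N^{-1}\Bigl[\sqrt N\,\Psi_N(\theta^\star,\gamma^\star)+o_p(1)\Bigr].
\]
The multivariate CLT applies because $\mathbb{E}[m(\cdot;\theta^\star,\gamma^\star)]=0$ and the second moment $B$ is finite (Assumption 5). It gives $\sqrt N\,\Psi_N(\theta^\star,\gamma^\star)=O_p(1)$, which converges in distribution to $\mathcal N(0,B)$. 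Replacing $\bar A_N^{-1}$ by $A^{-1}$ therefore costs only $o_p(1)$, which gives the asymptotic linear representation. Slutsky's lemma then yields $\mathcal N(0,A^{-1}BA^{-T})$.

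\textbf{Main obstacle.} For non-smooth scores, notably the quantile indicator $\ind{Y\le\theta}$, the pathwise mean value step fails. There I would instead rely on the stochastic equicontinuity part of Assumption 5:
\[
\sqrt N\bigl[(\Psi_N-\Psi)(\hat\theta,\hat\gamma)-(\Psi_N-\Psi)(\theta^\star,\gamma^\star)\bigr]=o_p(1),
\]
which holds for a Donsker class. I would then Taylor expand the smooth population map $\Psi$ instead, reading Assumption 3's $A$ as $\nabla_\theta\Psi(\theta^\star,\gamma^\star)$. The term $\Psi(\theta^\star,\hat\gamma)$ is handled by Assumption 4 or by Neyman orthogonality. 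Cross-fitting, or the Donsker condition, is what makes the nuisance-induced empirical process term negligible.
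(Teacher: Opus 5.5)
Your proposal is correct and follows the same route as the paper's proof. Consistency comes from uniform convergence of $\mathbb{E}_N[m(\cdot;\theta,\hat\gamma)]$ to $\Psi(\theta,\gamma^\star)$ plus identification, and normality from linearizing the empirical moment around $(\theta^\star,\gamma^\star)$, with Assumption 4 removing the nuisance effect, followed by the CLT and Slutsky. Your explicit split of the paper's remainder $r_N$ into a nuisance bracket and a mean-value term with $\bar A_N$ is a more careful version of the same step, as is your separate stochastic-equicontinuity treatment of the non-smooth quantile score, where $A$ must indeed be read as $\nabla_\theta\Psi(\theta^\star,\gamma^\star)$.
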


\begin{corollary}[Consistency of the sandwich variance estimator]
If, in addition, $\hat{A}\overset{p}{\to}A$ and $\hat{B}\overset{p}{\to}B$, then
\[
\widehat{\Var{\hat{\theta}}}=\frac{1}{n}\hat{A}^{-1}\hat{B}\hat{A}^{-T}
\]
is a consistent estimator of the asymptotic covariance of $\hat{\theta}$.
\end{corollary}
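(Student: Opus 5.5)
The plan is the standard Z-estimator argument, using the five assumptions of Theorem~\ref{thm:consistency} in order and isolating nuisance estimation through Assumption~4.

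\textbf{Step 1: consistency.} I would show that $\hat{\theta}$ nearly zeroes the population map. Write
\[
\Psi(\hat\theta,\gamma^\star)=\bigl\{\Psi(\hat\theta,\gamma^\star)-\mathbb{E}_N m(\cdot;\hat\theta,\gamma^\star)\bigr\}+\bigl\{\mathbb{E}_N m(\cdot;\hat\theta,\gamma^\star)-\mathbb{E}_N m(\cdot;\hat\theta,\hat\gamma)\bigr\}+\mathbb{E}_N m(\cdot;\hat\theta,\hat\gamma).
\]
\begin{itemize}
\item The first brace is $o_p(1)$ by the uniform law of large numbers in Assumption~5.
\item The last term is $o_p(N^{-1/2})$ by hypothesis.
\item The middle brace is $o_p(1)$ by $\hat\gamma\to\gamma^\star$ together with continuity of $m$ in $\gamma$, made uniform in $\theta$ through the same ULLN class.
\end{itemize}
So $\Psi(\hat\theta,\gamma^\star)=o_p(1)$. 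Identification (Assumption~1) gives a well-separated zero, since $\Psi$ is continuous and $A$ is nonsingular, so $\Psi$ is locally injective. The usual argmin/Z-estimator consistency theorem then yields $\hat\theta\overset{p}{\to}\theta^\star$. Positivity (Assumption~2) is used only to keep $S/\pi$ bounded, so the score class has an integrable envelope. MAR is what makes $\theta^\star$ equal the human-outcome functional, but it plays no further role in the asymptotics.

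\textbf{Step 2: linearization.} I would decompose
\begin{align*}
o_p(N^{-1/2})=\mathbb{E}_N m(\hat\theta,\hat\gamma)
&=\mathbb{E}_N m(\theta^\star,\gamma^\star)+\bigl[\mathbb{E}_N m(\theta^\star,\hat\gamma)-\mathbb{E}_N m(\theta^\star,\gamma^\star)\bigr]\\
&\quad+\bigl[\mathbb{E}_N m(\hat\theta,\hat\gamma)-\mathbb{E}_N m(\theta^\star,\hat\gamma)\bigr].
\end{align*}
The first bracket is $o_p(N^{-1/2})$ directly by Assumption~4. For the second bracket, I would add and subtract $\Psi(\hat\theta,\hat\gamma)-\Psi(\theta^\star,\hat\gamma)$. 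The resulting centered empirical-process increment is $o_p(N^{-1/2})$ by stochastic equicontinuity; this is the CLT/Donsker part of Assumption~5, or in the cross-fitted version it follows by conditioning on the training folds. The deterministic part is handled by the mean value theorem and Assumption~3:
\[
\Psi(\hat\theta,\hat\gamma)-\Psi(\theta^\star,\hat\gamma)=\{A+o_p(1)\}(\hat\theta-\theta^\star),
\]
using continuity of $\nabla_\theta\Psi$ at $(\theta^\star,\gamma^\star)$.

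\textbf{Step 3: rate and conclusion.} Combining Step~2 gives
\[
\{A+o_p(1)\}(\hat\theta-\theta^\star)=-\mathbb{E}_N m(\theta^\star,\gamma^\star)+o_p(N^{-1/2})+o_p(\|\hat\theta-\theta^\star\|).
\]
Since $\mathbb{E}_N m(\theta^\star,\gamma^\star)=O_p(N^{-1/2})$ by the CLT and $A$ is invertible, I would first obtain $\sqrt N$-consistency. Then I would invert to get the stated asymptotic linear representation with $-A^{-1}$. The multivariate CLT with covariance $B=\E{mm^\top}$ (using $\E{m}=0$) and Slutsky's lemma give the normal limit with $\Sigma_\theta=A^{-1}BA^{-T}$.

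The corollary follows from the continuous mapping theorem: $A\mapsto A^{-1}$ is continuous at nonsingular $A$. The $1/n$ normalization should be read as $1/N$ matched to the $\sqrt N$ scaling.

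\textbf{Main obstacle.} The hard step is the stochastic equicontinuity in Step~2 with an estimated $\hat\gamma$. This is where I would lean on cross-fitting: conditional on the fold used to fit $\hat f^{(-k)}$, the evaluation-fold score is a fixed function. Chebyshev's inequality plus $L_2$ convergence of $m(\cdot;\theta,\hat\gamma)-m(\cdot;\theta^\star,\gamma^\star)$ then controls the term without Donsker conditions. For the quantile case, where the indicator is not differentiable, I would apply the differentiability of Assumption~3 only to $\Psi$, not to $m$ itself, as in Step~2.
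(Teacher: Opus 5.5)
Your proposal is correct and matches the paper's implicit argument. The corollary is the preceding theorem plus the continuous mapping theorem: $\hat A$ is invertible with probability tending to one and $\hat A^{-1}\hat B\hat A^{-T}\overset{p}{\to}A^{-1}BA^{-T}=\Sigma_\theta$, so $N\,\widehat{\Var{\hat\theta}}\overset{p}{\to}\Sigma_\theta$ once $1/n$ is read as $1/N$. Your Steps 1--3 re-derive the theorem along the same $Z$-estimation lines as the paper's proof, so for the corollary they can simply be cited rather than reproved.
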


\begin{remark}
This theorem applies directly to the augmented estimating equations for the population mean and quantiles introduced above. In those cases, the nuisance functions are the labeling propensity $\pi(X)$ together with the regression adjustments $\mu(X)$ or $\phi(X;\theta)$. Neyman orthogonality is especially useful because it makes the estimator insensitive, to first order, to regularization bias from flexible nuisance fits.
\end{remark}

\subsection{Proof of Theorem \ref{thm:consistency}}
\begin{proof}
Consistency follows from standard $Z$-estimation arguments. By the uniform law of large numbers and nuisance consistency,
\[
\sup_{\theta \in \Theta}
\left\|
\mathbb{E}_N[m(Y,S,X;\theta,\hat{\gamma})]
-
\Psi(\theta,\gamma^\star)
\right\| = o_p(1).
\]
Since $\Psi(\theta,\gamma^\star)$ has a unique zero at $\theta^\star$, any approximate empirical root must converge in probability to $\theta^\star$.

For asymptotic normality, expand the empirical moment around $(\theta^\star,\gamma^\star)$:
\[
0
=
\mathbb{E}_N[m(Y,S,X;\hat{\theta},\hat{\gamma})]
=
\mathbb{E}_N[m(Y,S,X;\theta^\star,\gamma^\star)]
+ A(\hat{\theta}-\theta^\star)
+ r_N,
\]
where $r_n=o_p(n^{-1/2})$ by differentiability, stochastic equicontinuity, and the assumed negligible first-order effect of nuisance estimation. Rearranging gives
\[
\sqrt{N}(\hat{\theta}-\theta^\star)
=
-A^{-1}\sqrt{N}\,\mathbb{E}_N[m(Y,S,X;\theta^\star,\gamma^\star)] + o_p(1).
\]
The multivariate central limit theorem yields
\[
\sqrt{N}\,\mathbb{E}_N[m(Y,S,X;\theta^\star,\gamma^\star)]
\rightsquigarrow
\mathcal{N}(0,B),
\]
and Slutsky's theorem implies the stated Gaussian limit with covariance $A^{-1}BA^{-T}$.
\end{proof}

\section{Extension: constrained Weighting Optimization for Adaptive Sampling}\label{app:sampling}

\subsection{Framework}
Suppose we can learn a good outcome model from the one round of labeling, we can then solve a constrained weighting objective to optimize future sampling schema under a fixed budget when the goal is to estimate the population summary metrics. We use population means as an example. Formally, we want to solve the following problem:
\[
\min_{\sigma} \sum_i \frac{r_i^2}{e(x_i)} \quad \text{subject to} \quad \frac{1}{N}\sum_i e(x_i) \le \eta,
\]
where $e(x_i) = \pi + (1-\pi)\sigma(b + x_i^\top \beta)$ is the inclusion probability for item $i$, and $b$ and $\beta$ are the parameters of the outcome model.
We use an augmented Lagrangian approach with a hinge-squared penalty for constraint violations. This formulation encourages larger weights on high-residual items while enforcing a global sampling-rate cap.

We propose the following finite-population labeling workflow:
\begin{enumerate}
    \item Draw an initial random sample (stage 1), collect human labels, and fit an outcome model $f$.
    \item Fit a propensity model $\sigma$ from features (primarily embedding PCs) to approximate hardness-informed inclusion scores.
    \item Sample additional items adaptively (stage 2) using probabilities derived from $\sigma$ under an overall budget $\eta$.
    \item Estimate the population mean using the AEE estimator that combines inverse-propensity correction with outcome-model augmentation.
\end{enumerate}

This objective prioritizes high-residual items while enforcing a global sampling-rate cap. This is relevant to active statistical inference literature: \citep{li2025robust, hamilton2025active}.

\subsection{Variance characterization of the AEE estimator}
First, let's think about the variance of the AEE estimator. For simplicity, let's assume that the outcomes are i.i.d. draws from a super population $(y, x)$ from some distribution $P$. Then, we can write the AEE estimator as
\begin{align*}
    \hat{\mu} = \frac{1}{N} \sum_i \frac{S_i (y_i - f(x_i))}{e(x_i)} + f(x_i).
\end{align*}
Due to independence, the variance of $\hat{\mu}$ is given by
\begin{align*}
    \frac{1}{N} \Var{\frac{S (y - f(x))}{e(x)} + f(x)}.
\end{align*}
Now we compute the variance term. First, notice the following decomposition:
\begin{align*}
  & \frac{S (y - f(x))}{e(x)} + f(x)\\
   = & \underbrace{\frac{S (y - f^\star(x))}{e(x)}}_{\text{Term I}} + \underbrace{f^\star(x)}_{\text{Term II}} + \underbrace{\frac{(S - e(x))(f^\star(x) - f(x))}{e(x)}}_{\text{Term III}},
\end{align*}
Note that Term I, II and III are mutually orthogonal, i.e.
\begin{align*}
\Cov{\text{Term I}}{\text{Term II}} =& 0 \\
\Cov{\text{Term I}}{\text{Term III}} =& 0 \\
\Cov{\text{Term II}}{\text{Term III}} =& 0.
\end{align*}

Therefore, the variance of $\hat{\mu}$ is given by
\begin{align*}
    \frac{1}{N} \Var{\text{Term I}} + \frac{1}{N} \Var{\text{Term II}} + \frac{1}{N} \Var{\text{Term III}}.
\end{align*}
Now we compute the variance of each term.
\begin{align*}
    \Var{\text{Term I}} = \E{\frac{\sigma^{*2}(x)}{e(x)}}.
\end{align*}
For Term II, we have
\begin{align*}
    \Var{\text{Term II}} = \E{(f^\star(x) - \mu)^2}.
\end{align*}
For Term III, we have
\begin{align*}
    \Var{\text{Term III}} = \E{\frac{(1 - e(x))(f^\star(x) - f(x))^2}{e(x)}}.
\end{align*}

Therefore we summarize two forms of the variance of the AEE estimator:

\begin{itemize}
    \item \textbf{Form 1:}
    \begin{align*}
        \Var{\hat{\mu}} = \frac{1}{N}\E{\frac{\sigma^{*2}(x)}{e(x)} + (f^\star(x) - \mu)^2 + \frac{(1 - e(x))(f^\star(x) - f(x))^2}{e(x)}}.
    \end{align*}
    \item \textbf{Form 2:}
    \begin{align*}
        \Var{\hat{\mu}} = \frac{1}{N}\E{\frac{(y - f(x))^2}{e(x)} + (f^\star(x) - \mu)^2 - (f^\star(x) - f(x))^2}.
    \end{align*}
\end{itemize}

\paragraph{What are we actually optimizing when we optimize the propensity score?}

From Form 1, we can see that given a fixed propensity model, we can benefit from fitting a better outcome model to reduce the variance of the AEE estimator.

From Form 2, we can see that given a fixed outcome model, we can benefit from fitting a better propensity model to reduce the variance of the AEE estimator. More importantly, the best choice of propensity model should be proportional to the square root of the conditional quadratic error of the outcome model. To show this, let's try to optimize the propensity model based on Form 2. Define the conditional squared error as:
\begin{align*}
    \sigma^2_f(x) = \E{(y - f(x))^2 \mid x}.
\end{align*}
When $f = f^\star$, we have $\sigma^2_f(x) = \sigma^{*2}(x)$.
By Cauchy-Schwarz inequality, we have
\begin{align*}
    \E{\sigma_f(x)}^2 \le \E{\frac{\sigma^2_f(x)}{e(x)}} \cdot \E{e(x)} \le  \eta \E{\frac{\sigma^2_f(x)}{e(x)}}.
\end{align*}
Therefore, we have the following lower bound:
\begin{align*}
    \E{\frac{\sigma^2_f(x)}{e(x)}} \ge \frac{\E{\sigma_f(x)}^2}{\eta}.
\end{align*}
The equality holds when
\begin{align*}
    e(x) \propto \sigma_f(x).
\end{align*}
That is, the best choice of propensity model should be proportional to the square root of the conditional quadratic error of the outcome model. This is consistent with the intuition that we should sample more items that are harder to predict.

\subsection{Synthetic experiments on summary metrics}

To isolate the benefit of adaptive sampling under controlled heteroskedasticity, we construct a synthetic labeling environment that inherits the covariate structure of the real PERSUADE corpus but replaces human scores with outcomes drawn from a known data-generating process (DGP).

\subsubsection{Data-Generating Process}
Let $e_i\in\mathbb{R}^d$ denote the OpenAI text embedding for essay $i$, and let $\bar\ell_i$ be the average LLM-judge score across the model panel. We compute principal components from the standardised embedding matrix and define the conditional mean
\[
\mu_i \;=\; \sum_{j=1}^{5}\mathrm{PC}_{ij} \;+\; \bar\ell_i,
\]
so that the outcome depends on both intrinsic essay features (through the first five PCs) and the aggregate model-judge signal. Conditional variance is heteroskedastic in the first principal component:
\[
\sigma_i^2 \;=\; 0.1\,\exp \bigl(\lambda\,\mathrm{PC}_{i1}\bigr),
\]
with $\lambda>0$ governing the severity of heteroskedasticity. For each item we draw
\[
Y_i \;\sim\; \mathcal{N}\!\bigl(\mu_i,\;\sigma_i^2\bigr).
\]
This DGP makes high-PC1 items intrinsically noisier while low-PC1 items remain relatively clean, creating a directional heteroskedasticity that the propensity model can exploit by selectively oversampling the noisy tail.

\subsubsection{Simulation Protocol}
We sweep $\lambda\in\{0.5, 1.0, 1.5, 2.0, 2.5, 3.0\}$ to trace the effect of increasing heteroskedasticity. For each value of $\lambda$, the synthetic outcomes are generated once and held fixed across Monte Carlo trials. Each trial then executes one of three estimation strategies under a common total labeling budget $\eta$:

\begin{enumerate}
    \item \textbf{Baseline 1 (Direct Sample Mean).} Draw $\lfloor\eta N\rfloor$ items uniformly at random, collect their labels, and report the sample mean.
    \item \textbf{Baseline 2 (One-Stage estimation).} Draw $\lfloor\eta N\rfloor$ items uniformly, fit an outcome model $\hat f$ by ridge regression on embedding PCs, and form an AEE estimator with constant propensity $e(x_i)=\eta$.
    \item \textbf{Two-Stage Adaptive estimation.} Draw a stage-1 pilot sample at rate $\pi<\eta$. Fit an outcome model $\hat f$ and compute squared residuals $r_i^2=(Y_i-\hat f(x_i))^2$ on the pilot. Fit a propensity model $\sigma(x_i)$ on embedding PCs to predict hardness, then draw stage-2 items with inclusion probability $e(x_i)=\pi+(1-\pi)\sigma(x_i)$ subject to a mean-budget constraint $\frac{1}{N}\sum_i e(x_i)\le\eta$. Combine both stages via an AEE estimator with item-specific propensities $e(x_i)$.
\end{enumerate}

The propensity model solves the constrained weighting problem described above using an augmented Lagrangian optimizer, where $e(x_i)=\pi+(1-\pi)\,\sigma(b+x_i^\top\beta)$ and $(b,\beta)$ are chosen to minimize $\sum_i r_i^2/e(x_i)$ subject to $\frac{1}{N}\sum_i e(x_i)\le\eta$. We allow separate PCA dimensionalities for the outcome model and the propensity model.

Performance is assessed by the root mean squared error (RMSE) of each estimator relative to the true population mean $\mu=N^{-1}\sum_i\mu_i$, computed over repeated Monte Carlo trials. We additionally report bias and standard deviation to diagnose whether improvements stem from variance reduction or bias correction.

\subsubsection{Synthetic Simulation Results}

Figure~\ref{fig:synthetic-bias-var-mse} reports the bias, variance, and MSE of all three estimators as $\lambda$ increases. The direct sample mean (Baseline~1) maintains near-zero bias but its variance stays flat and high across all $\lambda$ values, as uniform sampling cannot adapt to the heteroskedastic noise structure. The one-stage AEE estimator (Baseline~2) substantially reduces variance relative to the direct mean at low $\lambda$ by leveraging the outcome model, but its variance grows steadily with $\lambda$ as residual heteroskedasticity overwhelms the constant-propensity correction. The two-stage adaptive AEE estimator achieves the lowest MSE across all $\lambda$ values. Its advantage widens as heteroskedasticity increases: at $\lambda=3.0$ the two-stage MSE is roughly half that of the one-stage AEE estimator and over an order of magnitude below the direct mean. All three estimators remain approximately unbiased throughout the sweep, confirming that the MSE gains are driven entirely by variance reduction through adaptive allocation of the labeling budget toward the noisy tail of the PC1 distribution.

\begin{figure*}[t]
    \centering
    \includegraphics[width=0.95\linewidth]{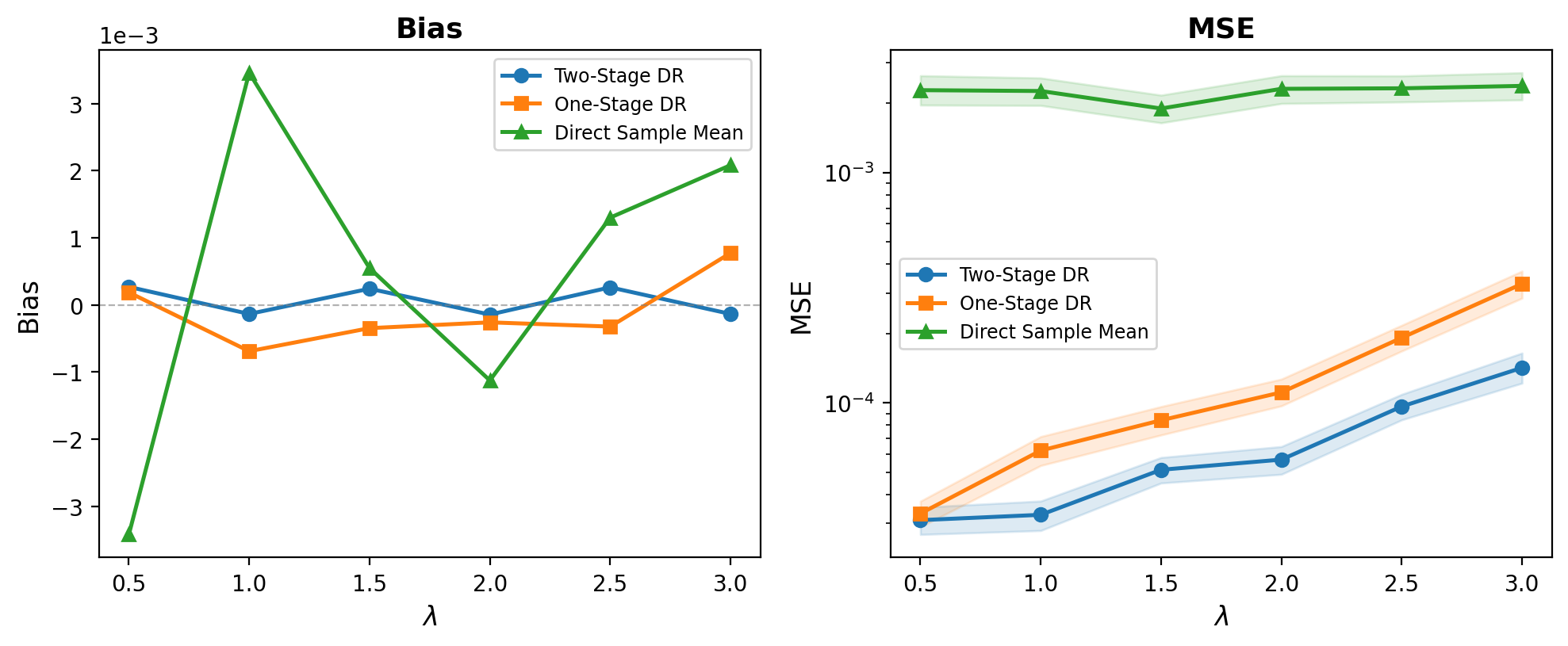}
    \caption{Synthetic simulation: bias and MSE of three population-mean estimators as a function of the heteroskedasticity parameter~$\lambda$. Budget $\eta=0.40$, pilot rate $\pi=0.15$, 400 Monte Carlo trials per $\lambda$ value.}
    \label{fig:synthetic-bias-var-mse}
\end{figure*}

\subsubsection{Sensitivity to Pilot Rate and Budget}

We next fix $\lambda=1.5$ and study how estimation quality depends on the pilot rate~$\pi$ and the total budget~$\eta$.

\paragraph{Varying $\pi$ (fixed $\eta=0.40$).}
Figure~\ref{fig:sweep-pi} sweeps $\pi\in\{0.05,0.10,\ldots,0.30\}$. Because the baselines use the full budget $\eta$ in a single uniform draw, their MSE is invariant to~$\pi$. The two-stage AEE estimator is relatively stable across this range, with a slight optimum near $\pi=0.15$; very small pilots ($\pi=0.05$) leave the outcome model under-trained, while large pilots ($\pi\ge0.25$) consume too much budget before the adaptive stage can act. All estimators remain approximately unbiased.

\paragraph{Varying $\eta$ (fixed $\pi=0.15$).}
Figure~\ref{fig:sweep-eta} sweeps $\eta\in\{0.20,0.25,\ldots,0.45\}$. As expected, MSE decreases monotonically with budget for all three methods. The two-stage AEE estimator maintains a consistent advantage over both baselines at every budget level: even at the tightest budget ($\eta=0.20$) it achieves lower MSE than the direct mean at $\eta=0.45$. The relative improvement of two-stage AEE estimator over one-stage AEE estimator is largest at small~$\eta$, where the adaptive allocation of the scarce remaining budget after the pilot is most valuable.

\begin{figure*}[t]
    \centering
    \includegraphics[width=0.95\linewidth]{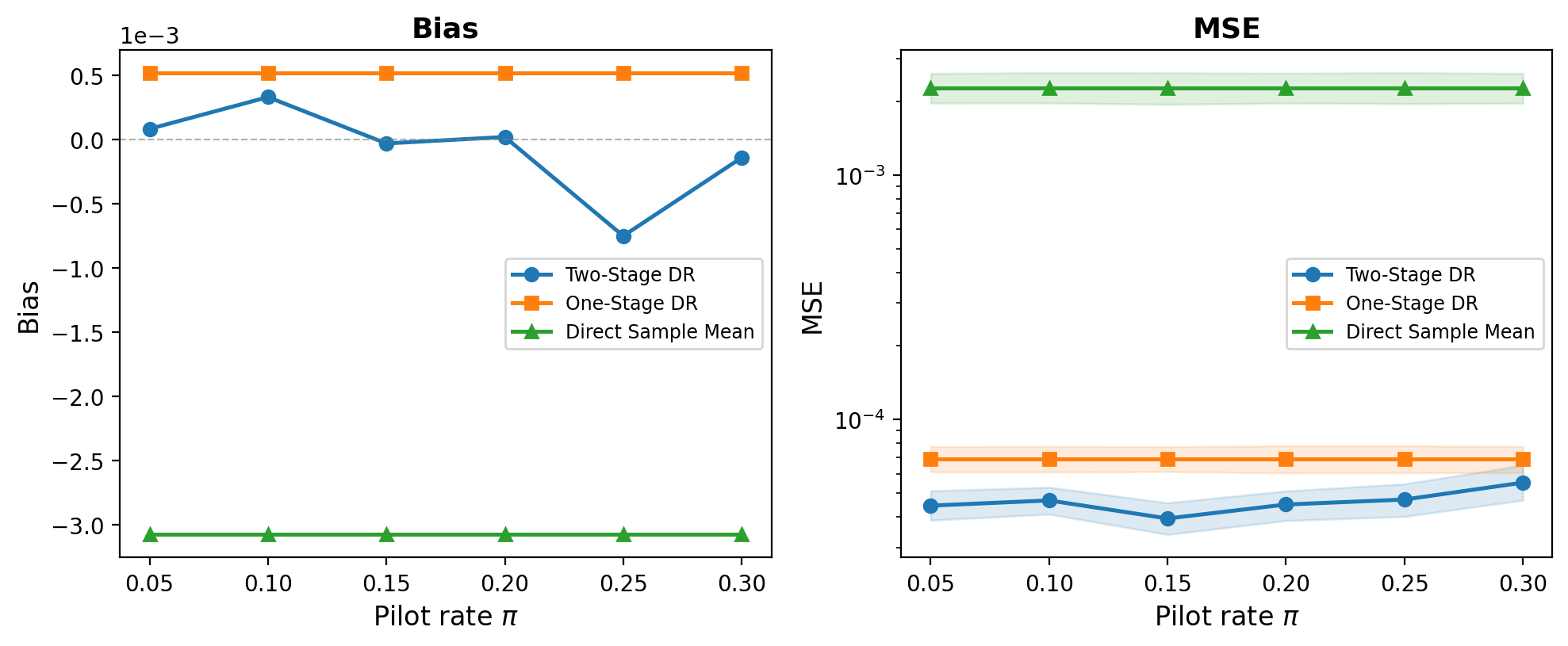}
    \caption{Sensitivity to pilot rate~$\pi$ with fixed budget $\eta=0.40$ and $\lambda=1.5$. Left: bias; right: MSE (log scale). 400 trials per setting.}
    \label{fig:sweep-pi}
\end{figure*}

\begin{figure*}[t]
    \centering
    \includegraphics[width=0.95\linewidth]{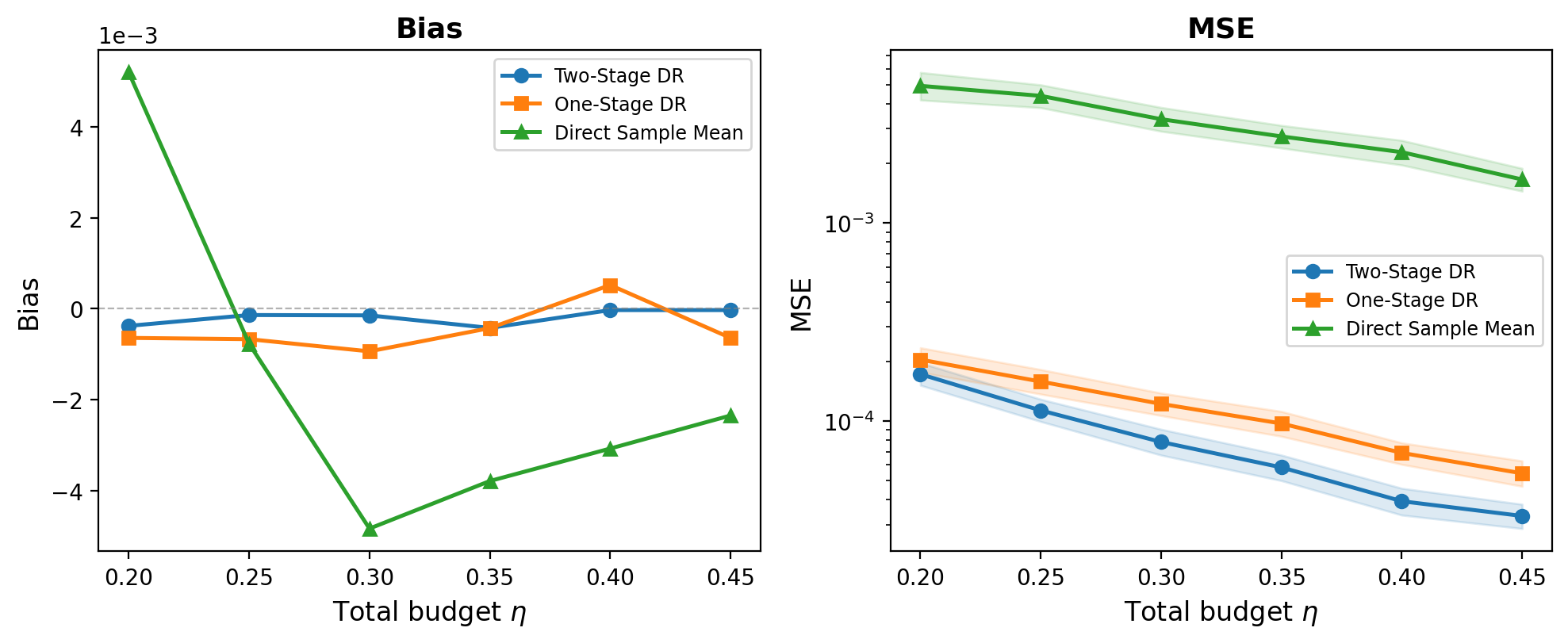}
    \caption{Sensitivity to total budget~$\eta$ with fixed pilot rate $\pi=0.15$ and $\lambda=1.5$. Left: bias; right: MSE (log scale). 400 trials per setting.}
    \label{fig:sweep-eta}
\end{figure*}

\newpage

\end{document}